\documentclass{article}
\PassOptionsToPackage{numbers, compress}{natbib}
\usepackage[final]{neurips_2022}

\usepackage[utf8]{inputenc} 
\usepackage[T1]{fontenc}    
\usepackage[backref]{hyperref}       
\usepackage{url}            
\usepackage{booktabs}       
\usepackage{amsfonts}       
\usepackage{amsmath}
\usepackage{amsthm}
\usepackage{graphicx}
\usepackage{verbatim}
\usepackage{wrapfig}
\usepackage{subfigure} 
\newtheorem{theorem}{Theorem}
\newtheorem{definition}{Definition}
\newtheorem{lemma}{Lemma}

\usepackage{nicefrac}       
\usepackage{microtype}      
\usepackage{xcolor}         
\usepackage{tcolorbox}
\newtheorem{corollary}{Corollary}

\title{EvenNet: Ignoring Odd-Hop Neighbors Improves Robustness of Graph Neural Networks}

%

\author{%
  Runlin Lei\\
  Renmin University of China\\
  runlin\_lei@ruc.edu.cn \\
  \And 
  Zhen Wang \\   
  Alibaba Group \\
  jones.wz@alibaba-inc.com \\
  \And
  Yaliang Li \\
  Alibaba Group \\
  yaliang.li@alibaba-inc.com \\
  \And
  Bolin Ding \\
  Alibaba Group \\
  bolin.ding@alibaba-inc.com \\
  \And
  Zhewei Wei 
  \thanks{Zhewei Wei is the corresponding author. The work was partially done at Gaoling School of Artificial Intelligence, Peng Cheng Laboratory, Beijing Key Laboratory of Big Data Management and Analysis Methods and MOE Key Lab of Data Engineering and Knowledge Engineering.} \\
  Renmin University of China \\
  zhewei@ruc.edu.cn\\
  \And
}

\begin{document}

\maketitle

\begin{abstract}
Graph Neural Networks (GNNs) have received extensive research attention for their promising performance in graph machine learning. 
Despite their extraordinary predictive accuracy, existing approaches, such as GCN and GPRGNN, are not robust in the face of homophily changes on test graphs, rendering these models vulnerable to graph structural attacks and with limited capacity in  generalizing to graphs of varied homophily levels.
Although many methods have been proposed to improve the robustness of GNN models, the majority of these techniques are restricted to the spatial domain and employ complicated defense mechanisms, such as learning new graph structures or calculating edge attention. 
In this paper, we study the problem of designing simple and robust GNN models in the spectral domain. 
We propose EvenNet, a spectral GNN corresponding to an even-polynomial graph filter. 
Based on our theoretical analysis in both spatial and spectral domains, we demonstrate that EvenNet outperforms full-order models in generalizing across homophilic and heterophilic graphs, implying that ignoring odd-hop neighbors improves the robustness of GNNs. 
We conduct experiments on both synthetic and real-world datasets to demonstrate the effectiveness of EvenNet. 
Notably, EvenNet outperforms existing defense models against structural attacks without introducing additional computational costs and maintains competitiveness in traditional node classification tasks on homophilic and heterophilic graphs.
Our code is available in \url{https://github.com/Leirunlin/EvenNet}.
\end{abstract}

\section{Introduction}
Graph Neural Networks (GNNs) have gained widespread interest for their excellent performance in graph representation learning tasks~\cite{DefferrardBV16, HamiltonYL17, KipfW17, VelickovicCCRLB18, FelixWu2019SimplifyingGC}. 
GCN is known to be equivalent to a low-pass filter~\cite{balcilar2020analyzing, HoangNt2019RevisitingGN}, which leverages the homophily assumption that ``connected nodes are more likely to have the same label'' as the inductive bias. 
Such assumptions fail in heterophilic settings~\cite{JiongZhu2020BeyondHI}, where connected nodes tend to have different labels, encouraging research into heterophilic GNNs~\cite{SamiAbuElHaija2019MixHopHG, HongbinPei2020GeomGCNGG, JiongZhu2020BeyondHI}.
Among them, spectral GNNs with learnable polynomial filters~\cite{FilippoMariaBianchi2021GraphNN, ChienP0M21, HeWHX21} adaptively learn suitable graph filters from training graphs and achieve promising performance on both homophilic and heterophilic graphs.
If the training graph is heterophilic, a high-pass or composite-shaped graph filter is empirically obtained.

While GNNs are powerful in graph representation learning, recent studies suggest that they are vulnerable to adversarial attacks, where graph structures are perturbed by inserting and removing edges on victim graphs to lower the predictive accuracy of GNNs~\cite{ZugnerG19, KaidiXu2019TopologyAA}. 
Zhu et al.~\cite{zhu2021relationship} first established the relationship between graph homophily and structural attacks. 
They claimed that existing attack mechanisms tend to introduce heterophily to homophilic graphs, which significantly degrades the performance of GNNs with low-pass filters. 
On the one hand, several attempts are made to improve the robustness of GNNs against the injected heterophily from the spatial domain~\cite{2020All, jin2020graph, HuijunWu2019AdversarialEF, ZhangZ20, DingyuanZhu2019RobustGC}. 
These methods either compute edge attention or learn new graph structures with node features, requiring high computational costs in the spatial domain. 
On the other hand, while spectral GNNs hold superiority on heterophilic graphs, their performance under structural perturbation is unsatisfactory as well, which arouses our interest in exploring the robustness of current spectral methods.
\begin{wrapfigure}{r}{0.35\textwidth}
\setlength{\belowcaptionskip}{-7pt}
    \includegraphics[width=0.35\textwidth]{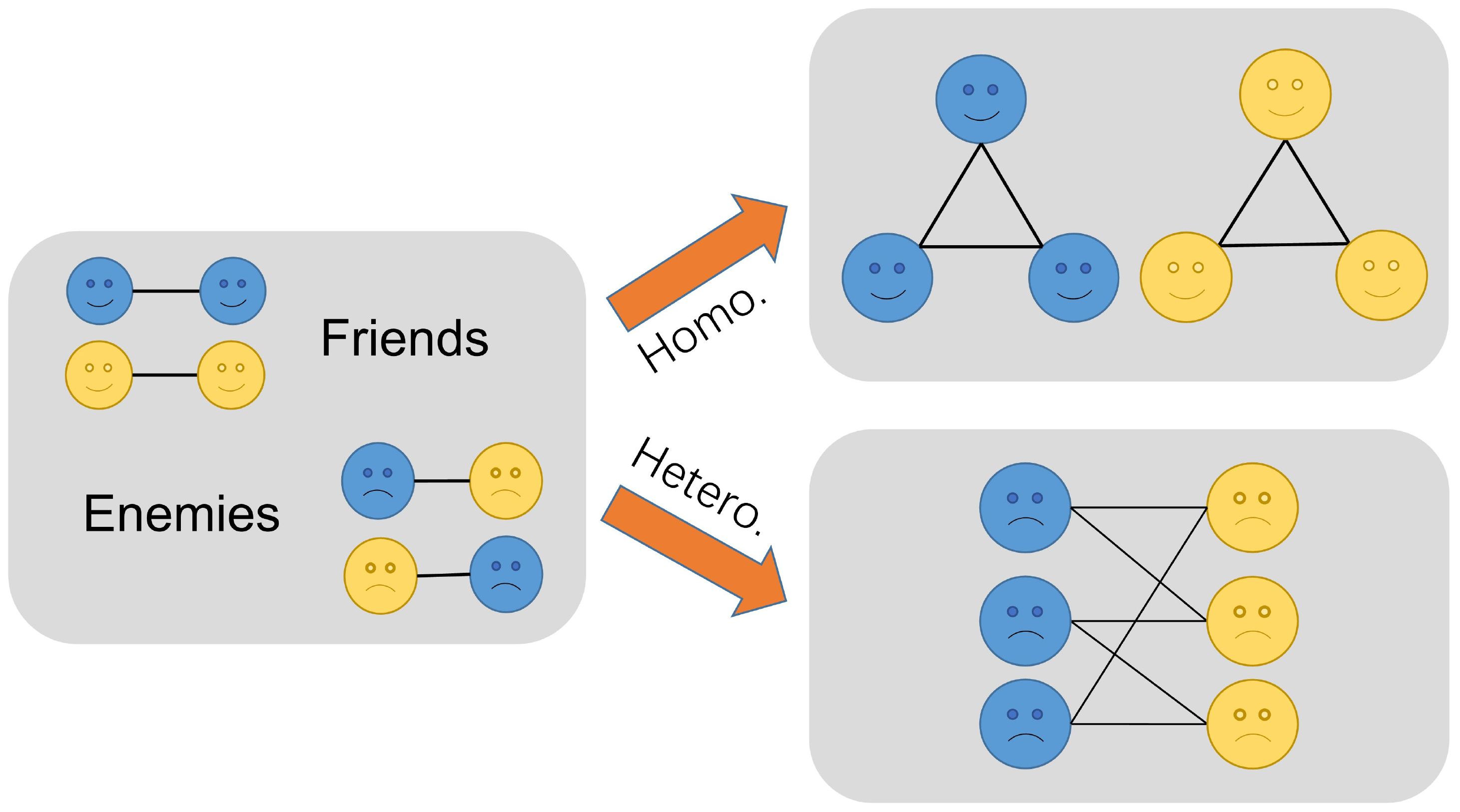}
    \caption{Two friend-enemy networks of opposite homophily. }
    \label{fig:01}
\end{wrapfigure}

In this study, we consider homophily-heterophily inductive learning tasks, which naturally model non-targeted structural attacks.
We observe that structure attacks enlarge the homophily gap between training and test graphs besides introducing heterophily, challenging spectral GNNs to generalize across different homophily levels. 
Consequently, despite their outstanding performance on heterophilic graphs, spectral GNNs such as GPRGNN have poor generalization ability when the training and test graphs have different homophily. 
For example, suppose we now have two friend-enemy networks like the ones in Figure~\ref{fig:01}. 
If friends are more likely to become neighbors, representing the relationship ``like'', the network is homophilic. 
If enemies form more links corresponding to the relationship ``hate'', the network becomes heterophilic.
If we apply spectral GNNs trained on ``like'' networks (where a low-pass filter is obtained) to ``hate'' networks, we will mistake enemies for friends on ``hate'' networks. Despite the strength of spectral GNNs in approximating optimal graph filters of arbitrary shapes, the lack of constraints on learned filters makes it difficult for them to generalize.

To improve the performance of current spectral methods against non-targeted structural adversarial attacks, we design a novel spectral GNN that realizes generalization across homophily.
Our contributions are:
\begin{itemize}
    \item We proposed EvenNet, a simple yet effective spectral GNN that can be generalized to graphs of different homophily. 
    EvenNet discards messages from odd-order neighbors inspired by balance theory, deriving a graph filter with only even-order terms.  
    We provide a detailed theoretical analysis in the spatial domain to illustrate the advantages of EvenNet in generalizing to graphs of different homophily.
    \item We propose Spectral Regression Loss (SRL) to evaluate the performance of graph filters on specific graphs in the spectral domain.
    We theoretically analyze the relationship between graph filters and graph homophily, confirming that EvenNet with symmetric constraints is more robust in homophily-heterophily inductive learning tasks.
    \item We conduct comprehensive experiments on both synthetic and real-world datasets. 
    The empirical results validate the superiority of EvenNet in generalizing to test graphs of different homophily without introducing additional computational complexity while remaining competitive in traditional node classification tasks.
\end{itemize}

\section{Preliminaries}
\textbf{Notations.} 
Let $\mathcal{G}=\left(\mathcal{V}, \mathcal{E}\right)$ denote an undirected graph, where $N=|\mathcal{V}|$ is the number of nodes. 
Let $A \in \{0, 1\}^{N\times N}$ denote the adjacency matrix. 
Concretely, $A_{ij}=1$ indicates an edge between nodes $v_i$ and $v_j$. 
Graph Laplacian is defined as $L=D-A$, along with a normalized version $\tilde{L} = I - D^{-1/2}AD^{-1/2}$, where $I$ is the identity matrix and $D$ is a diagonal degree matrix with each diagonal element $D_{ii} = \sum_{i=1}^N A_{ij}$.
It is known that $\tilde{L}$ is a symmetric positive semidefinite matrix that can be decomposed as $\tilde{L} = U\Lambda U^T$, where $\Lambda = diag\{\lambda_0, \ldots, \lambda_{N-1}\}$ is a diagonal eigenvalue matrix with $0 = \lambda_0 \le \lambda_1 \le \ldots \le \lambda_{N-1} \le 2$, and $U$ is a unitary matrix consisting of eigenvectors. 

For multi-class node classification tasks, nodes in $\mathcal{G}$ are divided into $K$ classes $\{\mathcal{C}_0,\ldots, \mathcal{C}_{K-1}\}$. 
Each node $v_i$ is attached with an $F$ dimension feature and a one-hot class label. 
Let $X \in \mathbb{R}^{N\times F}$ be the input feature matrix and $Y \in \mathbb{R}^{N \times K}=(\boldsymbol{y}_0, \ldots, \boldsymbol{y}_{K-1})$ be the label matrix, where $\boldsymbol{y}_i$ is the indicator vector of class $\mathcal{C}_i$. Let $\mathcal{R} = Y^{\top}Y$ and the size of class $\mathcal{C}_k$ be $\mathcal{R}_k$. 

\textbf{Graph filtering.}
The graph filtering operation on graph signal $X$ is defined as $Z= \sigma(Ug(\Lambda)U^TX)$, where $g(\Lambda)$ is the so-called graph filter, and $\sigma$ is the normalization function. Directly learning $g(\Lambda)$ requires eigendecomposition (EVD) of time complexity $O(N^3)$. Recent studies suggest using polynomials to approximate $g(\Lambda)$ instead, which is:
$$
Ug(\Lambda)U^TX \approx U\biggl(\sum_{i=0}^{K-1} w_k \Lambda^k \biggr)U^{\top}X = \sum_{i=0}^{K-1} w_k \tilde{L}^k X, 
$$
where $\{w_k\}$ are polynomial coefficients.
We can also denote a $K$-order polynomial graph filter as a filter function $g(\lambda) = \sum_{k=0}^{K} w_k \lambda^k$ that maps eigenvalue $\lambda \in [0,2]$ to $g(\lambda)$. 

\textbf{Homophily.} Homophily reflects nodes' preferences for choosing neighbors. For a graph of strong homophily, nodes show a tendency to form connections with nodes of the same labels. The ratio of homophily $h$ measures the level of overall homophily in a graph. 
Several homophily metrics have been proposed with different focuses~\cite{lim2021large, HongbinPei2020GeomGCNGG}. We adopt edge homophily following~\cite{JiongZhu2020BeyondHI}, defined by
\begin{align}
h=\frac{\left|\left\{(u, v):(u, v) \in \mathcal{E} \wedge y_{u}=y_{v}\right\}\right|}{|\mathcal{E}|}.   
\end{align}
By definition, $h\in[0, 1]$ is the fraction of intra-class edges in the graph. 
The closer $h$ is to $1$, the more homophilic a graph is.

\section{Proposed Method: EvenNet}
In this section, we first introduce our motivation and the methodology of EvenNet. 
We then explain how EvenNet enhances the robustness of spectral GNNs from the perspective of both spatial and spectral domains. 

\subsection{Motivations}
Reconsider the toy example in Figure~\ref{fig:01}. 
Relationships between nodes are opposite on homophilic and heterophilic graphs, being straightforward but erratic under changes in the graph structure. 
Unconstrained spectral GNNs tend to overuse such unstable relationships and fail to generalize across homophily.
In contrast, a robust model should rely on more general topological information beyond homophily.

Balance theory ~\cite{cartwright1956structural}, which arose from signed networks, offers a good perspective: ``The enemy of my enemy is my friend, and the friend of my friend is also my friend.''
Balance theory always holds as a more general law, regardless of how structural information is revealed on the graph.
As a result, we can obtain a more robust spectral GNN under homophily change by incorporating balance theory into graph filter design.

\subsection{EvenNet}
Denote propagation matrix as $P= I - \tilde{L} = D^{-\frac{1}{2}}AD^{-\frac{1}{2}}$.
A $K$-order polynomial graph filer is defined as $g(\tilde{L}) = \sum_{k=0}^{K} w_k \tilde{L}^k$, where $w_k, k=0, \ldots, K$ are learnable parameters.
We can rewrite the filter as $g(\tilde{L}) = \sum_{k=0}^{K} w_k (I-\tilde{L})^k =  \sum_{k=0}^{K} w_k P^k$ since $w_k$ is learnable. 
Then, we discard the monomials in $g(\tilde{L})$ containing odd-order $P$, obtaining:

\begin{align}
    g_{\textrm{even}}(\tilde{L}) = \sum_{k=0}^{\lfloor K/2 \rfloor} w_k(I-\tilde{L})^{2k} = \sum_{k=0}^{\lfloor K/2 \rfloor} w_k P^{2k}.
\end{align}
In practice, we decouple the transformation of input features and graph filtering process following~\cite{ChienP0M21, JohannesKlicpera2018PredictTP}. 
Our model then takes the simple form: 
\begin{tcolorbox}[colback=gray!25]
\begin{align}
Z = f(X,P) = \biggl( \sum_{k=0}^{\lfloor K/2 \rfloor} w_k P^{2k} t(X)\biggr),
\end{align}
\end{tcolorbox}
where $t$ is an input transformation function (e.g. MLP), and $Z$ is the output node representation that can be fed into a softmax activation function for node classification tasks.

From the perspective of the spectral domain, $g_{\textrm{even}}$ keeps both low and high frequencies components and suppresses medium-frequency components, which is a band-reject filter with the filter function symmetric about $\lambda=1$.
We provide a theoretical analysis to demonstrate further the advantages of $g_{\text{even}}$ in Section~\ref{sec:32} and~\ref{sec:33}. 

\subsection{Analysis from the Spatial Domain} \label{sec:32}
Recently, Chen et al.~\cite{chen2022does} analyzed the performance of graph filters under certain homophily.
They concluded that graph filters operate as a potential reconstruction mechanism of the graph structure.
A graph filter $g(\tilde{L})$ achieves \textbf{better} performance in a binary node classification task when the homophily of the transformed graph is \textbf{high}.
The transformed homophily can therefore be seen as an indicator of the performance of graph filters on specific tasks. 
We now provide the well-defined transformed homophily adopted from~\cite{chen2022does}.
\begin{definition}
($k$-step interaction probability) For propagation matrix $P = D^{-\frac{1}{2}}AD^{-\frac{1}{2}}$, the $k$-step interaction probability matrix is 
$$\tilde{\Pi}^{k}=\mathcal{R}^{-\frac{1}{2}} Y^{\top} P^{k} Y \mathcal{R}^{-\frac{1}{2}}.$$

\end{definition}
\begin{definition}
($k$-homophily degree) For a graph $\mathcal{G}$ with the $k$-step interaction probability $\tilde{\Pi}^k$, its $k$-homophily degree $\mathcal{H}_{k}(\tilde{\Pi})$ is defined as
$$
\mathcal{H}_{k}(\tilde{\Pi})=\frac{1}{N} \sum_{l=0}^{K-1}\biggl(\mathcal{R}_{l} \tilde{\Pi}_{ll}^{k}-\sum_{m \neq l} \sqrt{\mathcal{R}_{m} \mathcal{R}_{l}} \tilde{\Pi}_{l m}^{k}\biggr).
$$
The transformed 1-homophily degree with filter $g(\tilde{L})$ is $\mathcal{H}_{1}(g(I-\tilde{\Pi}))$.
\end{definition}
By definition, the $k$-homophily degree reflects the average possibility of deriving a node's label from its $k$-hop neighbors. 
In Theorem~\ref{theorem:1}, we show that even-order filters achieve more robust performance under homophily change by enjoying a lower variance of transformed homophily degree without losing average performance.
The detailed proof is provided in Appendix~\ref{APP:A1}, including discussions about multi-class cases.
\begin{theorem} \label{theorem:1}
In a binary node classification task, assume the edge homophily $h \in [0, 1]$ is a random variable that belongs to a uniform distribution. 
An even-order graph filter achieves no less $\mathbb{E}_{\mathcal{H}}\left[\mathcal{H}_{1}\left(g(I - \tilde{\Pi}) \right) \right]$ with lower variation than the full-order version.
\end{theorem}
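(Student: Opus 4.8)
The plan is to collapse the scalar $\mathcal{H}_{1}\!\left(g(I-\tilde{\Pi})\right)$ down to a single eigenvalue of the interaction matrix $\tilde{\Pi}$, and then exploit a parity symmetry in how that eigenvalue depends on $h$. First I would note that in the binary case ($K=2$) the homophily degree is a quadratic form: writing $\mathbf{b}=(\sqrt{\mathcal{R}_0},-\sqrt{\mathcal{R}_1})^{\top}$, one has $\mathcal{H}_{1}(M)=\tfrac{1}{N}\,\mathbf{b}^{\top} M\,\mathbf{b}$ for any symmetric $M$, since the off-diagonal terms reproduce exactly the $-2\sqrt{\mathcal{R}_0\mathcal{R}_1}M_{01}$ contribution. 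Next I would diagonalize $\tilde{\Pi}=\mathcal{R}^{-1/2}Y^{\top}PY\mathcal{R}^{-1/2}$. The Perron direction of $P$ pushed through $Y\mathcal{R}^{-1/2}$ yields a trivial eigenvector proportional to $(\sqrt{\mathcal{R}_0},\sqrt{\mathcal{R}_1})^{\top}$ with eigenvalue $1$, leaving a single ``signal'' eigenvalue $\rho\in[-1,1]$. For balanced classes $\mathbf{b}$ is precisely the signal eigenvector, so defining $\phi(x)=\sum_{k}w_k x^{k}$ (the filter written as a polynomial in $P$, i.e.\ $g(I-\tilde{\Pi})=\phi(\tilde{\Pi})$) gives the clean reduction $\mathcal{H}_{1}\!\left(g(I-\tilde{\Pi})\right)=\phi(\rho)$, with the even filter of Equation~(2) corresponding to $\phi_{\mathrm{even}}(\rho)=\sum_{k}w_k\rho^{2k}$.

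The second step is to pin down the law of $\rho$. Under a (near-)regular balanced model, a trace computation gives $\operatorname{tr}\tilde{\Pi}=\tilde{\Pi}_{00}+\tilde{\Pi}_{11}=2h$, and since the eigenvalues are $1$ and $\rho$, this forces $\rho=2h-1$. Hence if $h$ is uniform on $[0,1]$ then $\rho$ is uniform on $[-1,1]$, and in particular $\rho$ and $-\rho$ are equidistributed; equivalently the homophily flip $h\mapsto 1-h$ acts as $\rho\mapsto-\rho$. This odd symmetry about $\rho=0$ is the structural fact that drives everything.

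The third step is the variance/expectation bookkeeping. Split the full filter as $\phi=\phi_{\mathrm{even}}+\phi_{\mathrm{odd}}$, where $\phi_{\mathrm{odd}}$ collects the odd-power monomials. Because $\rho$ is symmetric about $0$ and $\phi_{\mathrm{odd}}$ is an odd function, $\mathbb{E}[\phi_{\mathrm{odd}}(\rho)]=0$, so $\mathbb{E}[\phi(\rho)]=\mathbb{E}[\phi_{\mathrm{even}}(\rho)]$ and the even filter loses nothing in expectation (giving the ``no less'' claim). For the variance, the product $\phi_{\mathrm{even}}(\rho)\phi_{\mathrm{odd}}(\rho)$ is odd, so $\operatorname{Cov}\!\big(\phi_{\mathrm{even}}(\rho),\phi_{\mathrm{odd}}(\rho)\big)=0$ and therefore $\operatorname{Var}[\phi(\rho)]=\operatorname{Var}[\phi_{\mathrm{even}}(\rho)]+\operatorname{Var}[\phi_{\mathrm{odd}}(\rho)]\ge\operatorname{Var}[\phi_{\mathrm{even}}(\rho)]$, which is exactly the strictly-no-larger variation of the even filter.

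I expect the main obstacle to be justifying the reduction and the symmetry beyond the idealized regime. Deriving $\rho=2h-1$ rigorously needs the balanced, (near-)regular structure; in the unbalanced case $\mathbf{b}$ is no longer an eigenvector, and expanding it in the eigenbasis produces an extra $h$-independent term proportional to $(\mathcal{R}_0-\mathcal{R}_1)^2\phi(1)$, whose even/full discrepancy ($\phi_{\mathrm{even}}(1)$ versus $\phi(1)$) must be shown not to reverse the expectation comparison. The multi-class extension replaces the single $\rho$ by the whole nontrivial spectrum of $\tilde{\Pi}$, so the clean ``odd part integrates to zero'' argument must be carried through eigenvalue-by-eigenvalue under an appropriate symmetry assumption on the joint law; this, together with the exact $\rho$--$h$ relationship, is what I would relegate to the appendix.
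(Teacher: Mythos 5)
Your proof is correct (under the idealized balanced, near-regular regime, which the paper's own proof also implicitly assumes), but it takes a genuinely different route. The paper never diagonalizes $\tilde{\Pi}$: it expands $\mathcal{H}_1\left(g(I-\tilde{\Pi})\right)$ into a constant plus a linear combination $\sum_i \theta_i \mathcal{H}_i(\tilde{\Pi})$ of $k$-homophily degrees, and then proves via a recursion on the same-label return probability $p_k$ of a length-$k$ random walk (Lemma~\ref{lemma:app:1}, with $p_1 = h$) that $\mathbb{E}_h[p_k] = \frac{1}{2}$ for odd $k$, hence $\mathbb{E}_h[\mathcal{H}_i(\tilde{\Pi})] = 0$ for odd $i$; zeroing the odd $\theta_i$ then preserves the mean. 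Your parametrization $\rho = 2h-1$ is precisely the closed form underlying that recursion (in the mean-field two-class model $p_k = (1+\rho^k)/2$), so your spectral reduction linearizes what the paper handles recursively. The main thing your route buys is the covariance step: the paper's variance claim is actually incomplete as written, since knowing the odd terms have zero mean and nonnegative variance does not by itself give $\operatorname{Var}[\phi] \ge \operatorname{Var}[\phi_{\mathrm{even}}]$ unless the even--odd cross term is controlled, whereas your parity argument ($\phi_{\mathrm{even}}(\rho)\phi_{\mathrm{odd}}(\rho)$ is odd and $\rho$ is symmetric about $0$) shows the covariance vanishes exactly, making the variance comparison rigorous. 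What the paper's route buys is the multi-class extension: the between-class random-walk-matrix formalism (Lemmas~\ref{lemma:app:class_rw_matrix} and~\ref{lemma:app:2}, assuming equal class sizes) goes through without tracking the whole spectrum, which is exactly the part you flag as residual work in your plan. One caution: your identification $g(I-\tilde{\Pi}) = \phi(\tilde{\Pi})$ with genuine matrix powers silently uses $(\tilde{\Pi})^k = \mathcal{R}^{-1/2}Y^{\top}P^kY\mathcal{R}^{-1/2}$, which fails on general graphs (it requires the class-averaging projection $Y\mathcal{R}^{-1}Y^{\top}$ to commute with $P$) and holds precisely in the mean-field/SBM regime you already invoke for $\rho = 2h-1$; since the paper's Definition~1 takes $\tilde{\Pi}^k$ to mean the $k$-step matrix built from $P^k$, this assumption should be stated explicitly as part of your hypotheses rather than folded into ``near-regularity'' alone.
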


\subsection{Analysis from Spectral Domain} \label{sec:33}
Similar to Section~\ref{sec:32}, we proposed Spectral Regression Loss (SRL) as an evaluation metric of graph filters in the spectral domain. 
In a binary node classification task, suppose the dimension of inputs $F=1$. Denote the difference of labels as $\Delta \boldsymbol{y} = \boldsymbol{y}_0 - \boldsymbol{y}_1$. 
A graph filtering operation is defined as $Z = \sigma(Ug(\Lambda) U^T X)$. 
Desirable filtering produces distinguishable node representations correlated to $\Delta y$ to identify node labels.  
Let $\boldsymbol{\alpha} = U^{\top}\Delta y$ and $\boldsymbol{\beta} = U^{\top} X$. 
The classification task in the spectral domain is then a regression problem in the form of $\sigma(\boldsymbol{\alpha}) = \sigma(g(\Lambda)\boldsymbol{\beta})$.

We adopt Mean Squared Error (MSE) as the objective function of the regression problem and vector normalization as $\sigma$. 
Then SRL is defined as follows: 
\begin{definition}\label{def:3}
(Spectral regression loss.) Denote $\boldsymbol{\alpha} = (\alpha_0, \ldots, \alpha_{N-1})^{\top}, \boldsymbol{\beta} = (\beta_0, \ldots, \beta_{N-1})^{\top}$. In a binary node classification task, Spectral Regression Loss (SRL) of filter $g(\Lambda)$ on a graph $\mathcal{G}$ is:
\begin{align}
    L(\mathcal{G}) &= \sum_{i=0}^{N-1}\left(\frac{\alpha_i}{\sqrt{N}} - \frac{g(\lambda_i)\beta_i}{\sqrt{\sum_{j=0}^{N-1} g(\lambda_j^2)\beta_j^2}}\right)^2 \\
    &= 2 - \frac{2}{\sqrt{N}}\sum_{i=0}^{N-1}\frac{\alpha_i g(\lambda_i)\beta_i}{\sqrt{\sum_{j=0}^{N-1} g(\lambda_j^2)\beta_j^2}}.
\end{align}
The constant $\sqrt{N}$ comes from the fact $\sum_{i=0}^{N-1}\alpha_i^2=N$.
A detailed illustration is included in Appendix~\ref{APP:A2}.
A graph filter that achieves \textbf{lower} SRL is of \textbf{higher} performance in the task.
\end{definition} 
\textbf{Filters that Minimize SRL.} 
Suppose $\alpha_i = w\beta_i + \epsilon$, where $w > 0$ reflects the correlation between labels and features in the spectral domain and $\epsilon$ is the noise term. 
If $\epsilon$ is close to 0, indicating features are free of noise and highly predictive, an all-pass filter (for example, MLP) with $g(\lambda_i) = 1$ already minimizes SRL. 
If the noise becomes dominant, SRL approximately equals to $\sum_{i=0}^{N-1}(\frac{\alpha_i}{\sqrt{N}} - \frac{g(\lambda_i)}{\sqrt{\sum_j g(\lambda_j)^2}})^2$. 
In this noise-dominant case, an ideal filter is linearly correlated to $\boldsymbol{\alpha}$ and structure-based to achieve a lower SRL. 
Most real-world situations lie between these two opposite settings. 
As a result, the shape of an ideal graph filter lies between an all-pass filter and an $\boldsymbol{\alpha}$-dependent filter.  

From the discussion above, we have shown that the performance of graph filters is related to the correlation between $g(\lambda)$ and $\boldsymbol{\alpha}$. 
By connecting $\boldsymbol{\alpha}$ and $h$ in Theorem~\ref{theorem:2}, we establish the relationship between graph homophily and the performance of graph filters.

\begin{theorem}\label{theorem:2}
For a binary node classification task on a $k$-regular graph $\mathcal{G}$, let $h$ be edge homophily and $\lambda_i$ be the $i$-th smallest eigenvalue of $\tilde{L}$, then 
\begin{align}\label{eq:5}
    1 - h = \frac{\sum_{i=0}^{N-1}{\alpha_i^2}\lambda_i}{2\sum_{i} \lambda_i} 
\end{align}
The above equation can be extended to general graphs by replacing the normalized Laplacian $\tilde{L}$ with the unnormalized $L$. 
\end{theorem}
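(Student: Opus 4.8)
The plan is to recognize the numerator $\sum_{i} \alpha_i^2 \lambda_i$ as a Laplacian quadratic form and then evaluate it combinatorially from the label structure. Since $\boldsymbol{\alpha} = U^{\top}\Delta \boldsymbol{y}$ and $\tilde{L} = U\Lambda U^{\top}$ with $U$ orthogonal, I would first observe
\[
\sum_{i=0}^{N-1} \alpha_i^2 \lambda_i = \boldsymbol{\alpha}^{\top}\Lambda\boldsymbol{\alpha} = \Delta \boldsymbol{y}^{\top}\tilde{L}\,\Delta \boldsymbol{y}.
\]
Then I would expand this into the standard edge sum for the normalized Laplacian,
\[
\Delta \boldsymbol{y}^{\top}\tilde{L}\,\Delta \boldsymbol{y} = \sum_{(u,v)\in\mathcal{E}}\left(\frac{\Delta y_u}{\sqrt{d_u}} - \frac{\Delta y_v}{\sqrt{d_v}}\right)^2,
\]
which on a $k$-regular graph collapses to $\frac{1}{k}\sum_{(u,v)\in\mathcal{E}}(\Delta y_u - \Delta y_v)^2$ because every degree equals $k$.

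Next I would evaluate the edge sum using the fact that $\Delta y_u = \pm 1$ according to the class of $u$. An intra-class edge has $\Delta y_u = \Delta y_v$ and contributes $0$, while an inter-class edge has $\Delta y_u = -\Delta y_v$ and contributes $(\pm 2)^2 = 4$. Hence the sum equals $4$ times the number of inter-class edges, which by the definition of edge homophily is $4(1-h)\lvert\mathcal{E}\rvert$. Combining this with the regularity identity $\lvert\mathcal{E}\rvert = Nk/2$ yields $\sum_i \alpha_i^2\lambda_i = \frac{4(1-h)\lvert\mathcal{E}\rvert}{k} = 2N(1-h)$.

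For the denominator I would use $\sum_i \lambda_i = \mathrm{tr}(\tilde{L}) = N$, since $\tilde{L} = I - D^{-1/2}AD^{-1/2}$ and the diagonal of $D^{-1/2}AD^{-1/2}$ vanishes in a loopless graph. Dividing then gives $\frac{2N(1-h)}{2N} = 1-h$, as claimed. The extension to general graphs follows the identical route with $L = D-A$: the quadratic form becomes $\Delta \boldsymbol{y}^{\top}L\,\Delta \boldsymbol{y} = \sum_{(u,v)}(\Delta y_u - \Delta y_v)^2 = 4(1-h)\lvert\mathcal{E}\rvert$ with no degree normalization, while $\sum_i \lambda_i = \mathrm{tr}(L) = \sum_u d_u = 2\lvert\mathcal{E}\rvert$, so the ratio is again $1-h$.

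The computation is essentially routine; the only real care is in tracking the normalization constants — the $1/\sqrt{d}$ factors in the normalized quadratic form and the factor of $4$ from $(\Delta y_u - \Delta y_v)^2$ on crossing edges — and in recalling that $\sum_i \alpha_i^2 = \lVert\Delta \boldsymbol{y}\rVert^2 = N$, which is exactly the normalization already noted in Definition~\ref{def:3}. The one conceptual step worth isolating, and the main thing to get right, is the identity $\sum_i \alpha_i^2\lambda_i = \Delta \boldsymbol{y}^{\top}\tilde{L}\,\Delta \boldsymbol{y}$; after that, everything reduces to a direct edge count.
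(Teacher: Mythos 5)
Your proof is correct and takes essentially the same route as the paper's: your key identity $\sum_i \alpha_i^2\lambda_i = \Delta\boldsymbol{y}^{\top}\tilde{L}\,\Delta\boldsymbol{y}$ is exactly the Dirichlet-energy computation of Lemma~\ref{lemma:app:3}, evaluated by the same edge count $4(1-h)\lvert\mathcal{E}\rvert$ and the same trace identities for the denominator. The only organizational difference is that the paper proves the unnormalized statement first and transfers it to the $k$-regular normalized case via $\tilde{L}=L/k$ (matching eigenvectors and rescaling eigenvalues), whereas you expand the normalized quadratic form directly and let regularity collapse the $1/\sqrt{d}$ factors --- the same fact entering at a different point.
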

Notice that $\sum_{i=0}^{N-1}{\alpha_i^2}\lambda_i$ is a convex combination of non-decreasing $\{\lambda_i\}$ with weights $\alpha_i^2$. 
On a homophilic graph where $h$ is close to 1, the right-hand side of Equation~\ref{eq:5} is close to 0, implying larger weights for smaller $\lambda_i$. 
A low-pass filter that suppresses high-frequency components is more correlated with such $\boldsymbol{\alpha}$ and therefore achieves lower SRL. 
From previous works, we have known that low-pass filters hold superiority on homophilic graphs, which is consistent with our analysis.

In the case of generalization, the distribution of $\{\alpha_i \}$ is not fixed. 
A graph filter that minimizes the SRL on training graphs could achieve poor results on a test graph of different homophily. 
Remember that vanilla GCN could be worse than MLP on many heterophilic graphs. 
The same conclusion can be applied to learnable filters without any constraints, as they only tried to minimize the SRL of training graphs. 
In Theorem~\ref{theorem:3}, we prove that even-order design helps spectral GNNs better generalize between homophilic and heterophilic graphs as a practical constraint to current filters.

\begin{theorem} \label{theorem:3}
Suppose $\lambda_{N-1}=2$ for a homophilic graph $\mathcal{G}_1$ with non-increasing $\{\alpha_i\}$, and a heterophilic graph $\mathcal{G}_2$ with non-decreasing $\{\alpha_i\}$.
Then an even-order filter $g_{even}$ achieves a lower SRL gap $|L(\mathcal{G}_1) - L(\mathcal{G}_2)|$ than full-order filters when trained on one of the graphs and test on the other. 
\end{theorem}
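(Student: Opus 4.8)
The plan is to reduce the claim to a statement about cosine similarities and then exploit the reflection symmetry of even filters. First I would rewrite the loss. Since Definition~\ref{def:3} gives $L(\mathcal{G}) = 2 - \frac{2}{\sqrt{N}} S(\mathcal{G})$ with
$$S(\mathcal{G}) = \frac{\sum_{i=0}^{N-1}\alpha_i g(\lambda_i)\beta_i}{\sqrt{\sum_{j=0}^{N-1} g(\lambda_j)^2\beta_j^2}},$$
the gap factors as $|L(\mathcal{G}_1)-L(\mathcal{G}_2)| = \frac{2}{\sqrt N}\,|S(\mathcal{G}_1) - S(\mathcal{G}_2)|$, so it suffices to control $|S(\mathcal{G}_1) - S(\mathcal{G}_2)|$. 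Working in the structure-dominant regime where the feature weights $\beta_i$ can be absorbed (as in the noise-dominant discussion following Definition~\ref{def:3}), $S(\mathcal{G})$ is exactly the cosine similarity between the spectral label vector $\boldsymbol\alpha$ and the filter-response vector $\boldsymbol g = (g(\lambda_0),\dots,g(\lambda_{N-1}))$.

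The central structural fact I would establish is that $g_{\mathrm{even}}$ is symmetric about $\lambda = 1$. Because $g_{\mathrm{even}}(\tilde L) = \sum_k w_k P^{2k}$ with $P = I - \tilde L$, as a scalar filter $g_{\mathrm{even}}(\lambda) = \sum_k w_k (1-\lambda)^{2k}$, and since $(1-(2-\lambda))^{2k} = (\lambda-1)^{2k} = (1-\lambda)^{2k}$ we get $g_{\mathrm{even}}(\lambda) = g_{\mathrm{even}}(2-\lambda)$. Next I would invoke the hypothesis $\lambda_{N-1} = 2$: this forces a bipartite component, whose normalized-Laplacian spectrum is symmetric about $1$, so the eigenvalues pair up as $\lambda_i \leftrightarrow 2 - \lambda_i$ under the index reflection $i \mapsto N-1-i$. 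Under this reflection the even filter is invariant, i.e. $g_{\mathrm{even}}(\lambda_{N-1-i}) = g_{\mathrm{even}}(\lambda_i)$, whereas a generic full-order filter is not.

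With these two facts I would close the even-filter side. By Theorem~\ref{theorem:2}, homophily is governed by how $\alpha_i^2$ distributes over the eigenvalues: the homophilic $\mathcal{G}_1$ concentrates $\boldsymbol\alpha$ mass on small $\lambda_i$ (non-increasing $\alpha_i$) while the heterophilic $\mathcal{G}_2$ concentrates it on large $\lambda_i$ (non-decreasing $\alpha_i$), so $\mathcal{G}_2$'s spectral profile is essentially the reflection of $\mathcal{G}_1$'s. Since cosine similarity is invariant under a simultaneous reflection of both argument vectors, and the even filter is itself reflection-invariant, I would show $S(\mathcal{G}_1) = S(\mathcal{G}_2)$ in the exactly symmetric case, forcing a vanishing even-filter gap. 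For the full-order comparison, the filter minimizing SRL on the training graph is strongly asymmetric — low-pass if trained on $\mathcal{G}_1$, high-pass if trained on $\mathcal{G}_2$ — so $g(\lambda_i) \neq g(\lambda_{N-1-i})$; its cosine alignment with $\boldsymbol\alpha^{(1)}$ is high but with the reflected $\boldsymbol\alpha^{(2)}$ is low (indeed anti-correlated), producing a strictly larger $|S(\mathcal{G}_1)-S(\mathcal{G}_2)|$ and hence a larger gap.

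The part I expect to be the main obstacle is making the reflection correspondence between $\mathcal{G}_1$ and $\mathcal{G}_2$ rigorous from the stated hypotheses alone: the theorem only assumes monotonicity of $\{\alpha_i\}$, not an exact mirror relation, so I would either lean on $\lambda_{N-1}=2$ to supply an exact bipartite spectral symmetry with an accompanying pairing of the label spectra, or replace the clean identity $S(\mathcal{G}_1)=S(\mathcal{G}_2)$ by a rearrangement-inequality bound showing the even filter's gap is dominated by the full filter's gap under monotonicity. A secondary technical point is handling the feature weights $\beta_i$: I would carry out the cosine-similarity argument in the noise-dominant regime identified after Definition~\ref{def:3}, then argue that a symmetric or uniform $\boldsymbol\beta$ leaves the symmetry conclusion intact, so that the even filter's balanced treatment of the two spectral ends is ultimately what caps the generalization gap.
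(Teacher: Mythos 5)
Your core mechanism is in substance the paper's: decompose the filter about $\lambda=1$, use $g_{\mathrm{even}}(2-\lambda)=g_{\mathrm{even}}(\lambda)$ together with the antisymmetry of the odd remainder, absorb $\boldsymbol{\beta}$ via the positive-correlation assumption ($\mathbb{E}[\boldsymbol{\alpha}]=w\boldsymbol{\beta}$, $w>0$), and pair indices $i \leftrightarrow N-1-i$. But your \emph{primary} route overshoots the hypotheses at exactly the point you flag yourself: the theorem gives no mirror correspondence between the label spectra $\boldsymbol{\alpha}^{(1)}$ and $\boldsymbol{\alpha}^{(2)}$ of the two graphs, only monotonicity of each, so the clean identity $S(\mathcal{G}_1)=S(\mathcal{G}_2)$ (a vanishing even-filter gap) is not derivable, and the paper never claims it. What the paper proves is precisely your fallback, and it is simpler than a rearrangement inequality: writing $g_{\mathrm{even}}(\lambda)=\frac{1}{2}\left(g(\lambda)+g(2-\lambda)\right)$ and $g_{\mathrm{odd}}=g-g_{\mathrm{even}}$, the paired sums give an odd contribution proportional to $\sum_{i\le N/2}\left(\alpha_i^2-\alpha_{N-1-i}^2\right)g_{\mathrm{odd}}(\lambda_i)$ and an even contribution proportional to $\sum_{i\le N/2}\left(\alpha_i^2+\alpha_{N-1-i}^2\right)g_{\mathrm{odd}}(\lambda_i)$'s even analogue. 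Non-increasing $\{\alpha_i\}$ makes every paired difference $\alpha_i^2-\alpha_{N-1-i}^2$ nonnegative and non-decreasing makes it nonpositive, so the odd term reverses sign between $\mathcal{G}_1$ and $\mathcal{G}_2$ no matter how the two spectra relate to each other, while the even term keeps its sign on both graphs. The full-order filter thus lowers SRL on the training graph and raises it on the test graph relative to $g_{\mathrm{even}}$, which yields the larger gap --- a pointwise sign argument requiring no matching of the two graphs' spectra.

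One caveat where you are actually more careful than the paper: the index pairing tacitly needs $\lambda_{N-1-i}=2-\lambda_i$ for all $i$, i.e.\ full spectral symmetry about $1$ (bipartiteness), whereas $\lambda_{N-1}=2$ alone only guarantees a bipartite component; your explicit bipartite remark states the needed assumption more honestly than the paper's proof, which uses the pairing silently. A second minor mismatch: you drop $\beta_i$ entirely via the noise-dominant regime, while the paper keeps the weighted normalizers $T_{\mathrm{odd}}, T_{\mathrm{even}}$ and treats $L_o$ only as an approximate gap --- cosmetically different, same substance. In short: replace your primary route with your own fallback and your proposal coincides with the paper's proof; as written, the primary route contains a genuine unproved step (the exact reflection correspondence between the two graphs).
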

A discussion about the case where $\lambda_{N-1} < 2$ is given in the Appendix~\ref{APP:C}.
Theorem~\ref{theorem:3} reveals a trade-off in filter design between fitting the training graph and generalizing across graphs of different homophily. 
While naive low-pass filters and high-pass filters work better on graphs with certain homophily, EvenNet tolerates imperfect filter learning and becomes more robust under homophily changes. 
A specific example on ring graphs is given in the following corollary.
We see that EvenNet intrinsically satisfies the necessary condition for perfect generalization.

\begin{corollary}
\label{coro:1}
Consider two ring graphs $\mathcal{G}_1$ and $\mathcal{G}_2$ of $2n$ nodes, $n \in \mathbb{N}^{+}$. Suppose $h(\mathcal{G}_1) = 0$ and $h(\mathcal{G}_2) = 1$. 
Assume the spectrum of input difference $\boldsymbol{\beta} = c\boldsymbol{1}$, where $c > 0$ is a constant. Then the necessary condition for a graph filter $g(\lambda)$ to achieve $L(\mathcal{G}_1) = L(\mathcal{G}_2)$ is $g(0) = g(2)$.
\end{corollary}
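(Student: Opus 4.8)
The plan is to exploit the fact that, as graphs, $\mathcal{G}_1$ and $\mathcal{G}_2$ are the \emph{same} cycle $C_{2n}$: they share the adjacency matrix, the normalized Laplacian, and the eigenbasis $U$, and differ only through their labelings, hence only through $\boldsymbol{\alpha} = U^{\top}\Delta\boldsymbol{y}$. Since a ring is $2$-regular, $\tilde{L} = I - \tfrac12 A$, and the cycle Fourier modes diagonalize $A$ with eigenvalues $2\cos(\pi j/n)$, giving $\lambda_j = 1 - \cos(\pi j/n)$ for $\tilde{L}$. The two distinguished modes are the constant (DC) vector at $\lambda=0$ and the alternating (Nyquist) vector at $\lambda=2$, each of multiplicity one.

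First I would pin down $\Delta\boldsymbol{y}$ on each graph. Because $C_{2n}$ is connected and $h(\mathcal{G}_2)=1$ forces every edge to be intra-class, walking around the ring shows all nodes share one label, so $\Delta\boldsymbol{y}=\pm\boldsymbol{1}$ is the DC eigenvector. Because $h(\mathcal{G}_1)=0$ forces every edge to be inter-class, the labels must strictly alternate around the ring, and the even length $2n$ makes this alternation consistent, so $\Delta\boldsymbol{y}=(+1,-1,\dots)$ is exactly the Nyquist eigenvector at $\lambda=2$. Consequently $\boldsymbol{\alpha}$ is a single spike of magnitude $\|\Delta\boldsymbol{y}\|=\sqrt{2n}$: supported at $\lambda=0$ for $\mathcal{G}_2$ and at $\lambda=2$ for $\mathcal{G}_1$.

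Next I would substitute these spikes and the hypothesis $\boldsymbol{\beta}=c\boldsymbol{1}$ into the closed form of the SRL in Definition~\ref{def:3}. The denominator $\sqrt{\sum_j g(\lambda_j)^2\beta_j^2}=c\sqrt{\sum_j g(\lambda_j)^2}$ is identical for both graphs (same spectrum, same $\boldsymbol{\beta}$), while the single-spike numerators collapse each sum to one term, yielding $L(\mathcal{G}_2)=2-2g(0)/\sqrt{\sum_j g(\lambda_j)^2}$ and $L(\mathcal{G}_1)=2-2g(2)/\sqrt{\sum_j g(\lambda_j)^2}$. Equating these and cancelling the shared denominator (nonzero for $g\not\equiv 0$) leaves $g(0)=g(2)$ as the necessary condition. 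As a sanity check that ties back to EvenNet, any symmetric filter $g_{\mathrm{even}}(\lambda)=\sum_k w_k(1-\lambda)^{2k}$ satisfies $g_{\mathrm{even}}(0)=\sum_k w_k=g_{\mathrm{even}}(2)$ automatically.

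The step I expect to be the main obstacle is the label-to-spectrum identification in the second paragraph: one must argue that the homophily extremes $h=0$ and $h=1$ on an even ring leave \emph{no freedom} in the labeling (up to a global flip), and that the two resulting label-difference vectors are precisely the eigenvectors at the spectral endpoints $\lambda=0$ and $\lambda=2$. Everything afterward is bookkeeping in the SRL formula, since the matching spectra and the flat $\boldsymbol{\beta}$ force the normalizing denominators to cancel.
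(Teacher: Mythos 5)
Your proof is correct and reaches the paper's conclusion by the same overall strategy---compute the spectra of the two label-difference vectors in the common eigenbasis of the cycle, substitute into the SRL of Definition~\ref{def:3}, cancel the shared normalizing denominator, and read off $g(0)=g(2)$---but your key step is executed differently and more cleanly. The paper writes out the real Fourier eigenvectors $u_k(n)$ of the ring explicitly and proves via a telescoping trigonometric sum ($\sum_{n}\sin((2n-1)\theta) = (\cos 0 - \cos(N\theta))/(2\sin\theta)$) that the spectra $\boldsymbol{\delta}$ and $\boldsymbol{\delta}'$ agree on all middle indices $1\le i\le N-2$ and differ by $\pm\sqrt{N}$ only at $i=0$ and $i=N-1$, then forms the SRL gap. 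You instead observe that the homophily extremes leave no freedom in the labeling (up to a global sign, which both you and the paper fix by convention) and that $\Delta\boldsymbol{y}_2=\boldsymbol{1}$ and the alternating $\Delta\boldsymbol{y}_1$ are \emph{themselves} the eigenvectors at $\lambda=0$ and $\lambda=2$ of the bipartite even cycle, so each spectrum is a single spike of magnitude $\sqrt{2n}$ by orthogonality---no trigonometry needed. Your route buys two things: it makes the $\sqrt{N}$ magnitudes immediate, and it is basis-independent, which matters because the middle eigenvalues $1-\cos(\pi j/n)$ are doubly degenerate so $U$ is not unique there (your argument shows the middle coefficients vanish for any orthonormal completion, whereas the paper's computation is tied to one particular choice of $a_{kn}, b_{kn}$). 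What the paper's computation buys in exchange is an explicit entry-level verification that generalizes to comparing spectra of label vectors that are not exact eigenvectors. One small point of hygiene on your side: when you cancel the denominator $c\sqrt{\sum_j g(\lambda_j)^2}$ you should note it must be nonzero, i.e.\ $g$ does not vanish on the entire spectrum of the ring (if it did, the normalization $\sigma$ in the SRL would be ill-defined anyway), which is a slightly sharper caveat than ``$g\not\equiv 0$''; this does not affect the conclusion.
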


\subsection{Complexity}
Denote $N$ the number of nodes, $d$ the size of hidden channels (we assume it is of the same order as the size of input features), $|E|$ the number of edges, $L$ the number of MLP layers used in feature transformation and $K$ the order of the propagation layer.

Compared with structural learning methods which usually have a space complexity of $O(N^2)$, EvenNet takes up $O(|E|)$ space complexity, as it only needs to store the input sparse adjacency matrix during training. 
For the time complexity of EvenNet, the transform process has a time complexity of $O(Nd^2L)$, and the propagation process has a complexity $O(Kd|E|)$ during each forward pass.

In practice, H2GCN~\cite{JiongZhu2020BeyondHI} and ProGNN~\cite{jin2020graph} require $O(N^2)$ space complexity and are thus not scalable to large graphs. 
GCNII~\cite{chen2020simple} achieves its best performance with multiple stacked layers which is slow to train.
FAGCN~\cite{BoWSS21}, GAT~\cite{VelickovicCCRLB18} and GNNGuard~\cite{ZhangZ20} with attention calculations are also inefficient during training. 
Notice that the space and time complexity of EvenNet are both linear to $N$ and $|E|$, which is highly efficient.

We report the computational time and an experiment on a larger dataset ogbn-arxiv~\cite{HuFZDRLCL20} in~\ref{APP:E} to further verify the efficiency of EvenNet.

\section{Related Work}
\textbf{Spectral GNNs.} 
GNNs have become prevalent in graph representation learning tasks. Among them,
Spectral GNNs focus on designing graph filters with filter functions that operate on eigenvalues of graph Laplacian~\cite{JoanBruna2014SpectralNA}. Graph filters could be fixed~\cite{KipfW17, JohannesKlicpera2018PredictTP, FelixWu2019SimplifyingGC} or approximated with polynomials. 
ChebNet~\cite{DefferrardBV16} adopts Chebyshev polynomials to realize faster localized spectral convolution. 
ARMA~\cite{FilippoMariaBianchi2021GraphNN} achieves a more flexible filter approximation with Auto-Regressive Moving Average filters.
GPRGNN~\cite{ChienP0M21} connects graph filtering with graph diffusion and learns coefficients of polynomial filters directly. 
BernNet~\cite{HeWHX21} utilizes Bernstein approximation to learn arbitrary filtering functions. 
Although learnable graph filters perform well on heterophilic graphs, they have difficulties generalizing if a homophily gap exists between training and test graphs.

\textbf{GNNs for Heterophily.} Previous works pointed out the weakness of vanilla GCN on graphs with heterophily. Recently, various GNNs have been proposed to tackle this problem. Geom-GCN~\cite{HongbinPei2020GeomGCNGG} uses a novel neighborhood aggregation scheme to capture long-distance information. 
Zhu et al.~\cite{JiongZhu2020BeyondHI} introduces several designs that are helpful for GNNs to learn representations beyond homophily. 
FAGCN~\cite{BoWSS21} adaptively combines signals of different frequencies in message passing via a self-gating mechanism. 
While these methods can handle heterophilic graphs, they are not guaranteed to generalize across graphs of different homophily.

\textbf{Robust GNNs.} In the field of designing robust GNNs, existing methods can be divided into two main categories: 
\textbf{1) Models utilizing new graph structures.} GNN-Jaccard~\cite{HuijunWu2019AdversarialEF} and GNN-SVD~\cite{2020All} preprocess the input graph before applying vanilla GCN. 
ProGNN~\cite{jin2020graph} jointly learns a better graph structure and a robust model. 
\textbf{2) Attention-based models.} RGCN~\cite{DingyuanZhu2019RobustGC} uses variance-based attention to evaluate the credibility of nodes' neighbors.
GNNGuard~\cite{ZhangZ20} adopts neighbor importance estimation, aligning higher scores to trustworthy neighbors. 
TWIRLS~\cite{TWIRLS} applies an attention mechanism inspired by classical iterative methods PGD and IRLS. 
These methods are effective against structural attacks. 
However, the learned graph structure cannot be applied to inductive learning settings and requires additional memory. At the same time, attention-based models are limited in the spatial domain and need high computational costs.  
On the contrary, EvenNet improves the robustness of spectral GNNs without introducing additional computational costs.

\section{Experiment}
We conduct three experiments to test the ability of EvenNet in (1) generalizing across homophily on synthetic datasets, (2) defending against non-targeted structural attacks, and (3) supervised node classification on real-world datasets. 

\subsection{Baselines}
We compare our EvenNet with the following methods.
(1) Method only using node features: A 2-layer MLP. 
(2) Methods achieving promising results on homophilic graphs: GCN~\cite{KipfW17}, GAT~\cite{VelickovicCCRLB18}, GCNII~\cite{chen2020simple}. 
(3) Methods handling heterophilic settings: H2GCN~\cite{JiongZhu2020BeyondHI}, FAGCN~\cite{BoWSS21},  GPRGNN~\cite{ChienP0M21}.
We also include five advanced defense models in the experiment about adversarial attacks, including RobustGCN~\cite{DingyuanZhu2019RobustGC}, GNN-SVD~\cite{2020All}, GNN-Jaccard~\cite{HuijunWu2019AdversarialEF}, GNNGuard~\cite{ZhangZ20}, and ProGNN~\cite{jin2020graph}.  
We implement the above models with the help of PyTorch Geometric~\cite{fey2019fast} and DeepRobust libraries~\cite{li2020deeprobust}. Details about hyperparameters and network architectures are deferred to Appendix~\ref{APP:C}.

\subsection{Evaluation on synthetic datasets} \label{sec:csbm}
\textbf{Datasets.} 
In the first experiment testing generalization ability, we use cSBM model to generate graphs with arbitrary homophily levels following  ~\cite{ChienP0M21}. 
Specifically, we divide nodes into two classes of equal size. 
Each node is attached with a feature vector randomly sampled from a class-specific Gaussian distribution.
The homophily level of a graph is controlled by parameter $\phi \in [-1, 1]$.
A larger $|\phi|$ indicates that the generated graph provides stronger topological information, while $\phi=0$ means only node features are helpful for prediction.
Note that if $\phi > 0$, the graph is more homophilic and vice versa.
Details about cSBM dataset are included in Appendix~\ref{APP:B2}.

\textbf{Settings.} We set up node classification tasks in the inductive setting.
We generate three graphs of the same size for each sub-experiment, one graph each for training, validation, and testing.
Graphs for validation and testing share the same $\phi_{test}$, while training graphs either take $\phi_{train} = \phi_{test}$ or $\phi_{train} = - \phi_{test}$. 
If $\phi_{train} = -\phi_{test}$, the training and test graphs are of opposite homophily but provide the same amount of topological information. 
A model manages to generalize across homophily when it realizes high prediction accuracy in both scenarios. 
In practice, we choose $(\phi_{train}, \phi_{test}) \in \{(\pm 0.5, \pm0.5), (\pm0.75, \pm0.75)\}$.

\textbf{Results.} The results are presented in Table~\ref{tab:gen_res}.
When $\phi_{train} = \phi_{test}$, GPRGNN achieves the highest predictive accuracy as it best fits the desired graph filter. 
However, when $\phi_{train} = -\phi_{test}$, all methods except EvenNet suffer from a huge performance drop. 
Vanilla GCN, which corresponds to a low-pass filter, achieves desirable performance only when the test graph is homophilic. 
GPRGNN overfits training graphs most, resulting in more severe performance degradation on test graphs of opposite homophily.
EvenNet is the only method that achieves more than 75\% accuracy on all datasets among all the models, which is robust in generalization across homophily.

\begin{table}[htbp]
\centering
\caption{Average node classification accuracy(\%) and absolute performance gap(\%) between experiments of the same $\phi_{train}$ over ten repeated experiments on synthetic cSBM datasets. 
The best result is highlighted by \textbf{bold} font, the second best result is \underline{underlined}. }
\resizebox{\textwidth}{!}{
\begin{tabular}{lccrccrccrccr}
\toprule
$\phi_{train}$               & \multicolumn{3}{c}{0.75} & \multicolumn{3}{c}{0.50} & \multicolumn{3}{c}{-0.50} & \multicolumn{3}{c}{-0.75} \\
$\phi_{test}$               & 0.75       & -0.75   &gap($\downarrow$)    & 0.50        & -0.50   &gap($\downarrow$)     & -0.50         & 0.50   &gap($\downarrow$)     & -0.75    & 0.75  &gap($\downarrow$)  \\
\midrule
MLP &57.92 &57.24 &\textbf{0.68} &63.65 &\underline{64.26} &\underline{0.61} &63.28 &63.83 &\textbf{0.55} &56.92 &59.24 &\underline{2.32} \\
GCN &75.24 &60.31 &15.11 &78.98 &63.21 &15.77 &63.27 &\underline{76.67} &13.40 &60.48 &\underline{77.88} &17.40 \\
GAT &74.15 &\underline{60.55} & 13.60 &75.64 &61.96 &13.68  &64.43 &71.02 &6.59  &63.19 &71.61 &8.42 \\
GCNII &83.12 &54.30 & 28.82 & 78.07 &58.43 &19.64 &72.32 &67.68 &4.64 &65.93 &62.92 &3.01 \\
\midrule
H2GCN &76.41 &54.81 & 21.60 &78.86 &58.89 &19.97 &78.43 &59.77 &18.66 &76.29 & 55.92& 20.37\\
FAGCN &81.29 &60.44 & 20.85 &78.73 &60.28 &18.45 &79.45 &60.62 &18.83 &85.78 &57.34 &28.44 \\
GPRGNN &\textbf{95.93} &53.52 &42.41&\textbf{84.42} &56.16 &28.26 &\textbf{84.18} &63.76&20.42  &\textbf{95.99} &66.49 &29.52 \\
\midrule
EvenNet &\underline{95.29} &\textbf{94.59} &\underline{0.70} &\underline{82.37} &\textbf{82.57} &\textbf{0.20}  &\underline{81.99} &\textbf{79.81} &\underline{2.18} &\underline{94.79} &\textbf{96.25} &\textbf{1.46} \\
\bottomrule
\end{tabular}%
}
\label{tab:gen_res}
\end{table}%

\subsection{Performance under non-targeted structural adversarial attacks} \label{sec:atk}
\textbf{Datasets.} For adversarial attacks, we use four public graphs, Cora, Citeseer, PubMed~\cite{sen2008collective, yang2016revisiting} and ACM~\cite{han2019} available in DeepRobust Library~\cite{li2020deeprobust}. 
We use the same preprocessing method and splits as ~\cite{ZugnerG19}, where the node set is split into 10\% for training, 10\% for validation, and 80\% for testing, and the largest connected component of each graph for attacks are selected.

We include the experiment against non-targeted attacks on heterophilic datasets in Appendix~\ref{APP:G}, in which we use the same preprocessing methods and dense splits following~\cite{ChienP0M21} 

\textbf{Attack methods.} 
Graph structural attacks can be categorized into poison attacks and evasion attacks. 
In poison attacks, attack models are trained to lower the performance of a surrogate GNN model. 
The training graph and the test graph are both allowed to be perturbed but only with a limited amount of modifications, which are referred to as perturb ratios. 
Evasion attacks only happen during inference, meaning GNNs are trained on clean graphs.
In our study, we include two poison attacks,  Metattack (Meta)~\cite{ZugnerG19} and MinMax attack~\cite{KaidiXu2019TopologyAA} with GCN the surrogate model, and an evasion variant of DICE attack~\cite{MarcinWaniek2016HidingIA}.
Notice that we mainly focus on modification attacks, which are strictly structural attacks.
A discussion of GNNs under graph injection attacks is included in Appendix~\ref{APP:F}.

For poison attacks, we use the same setting in~\cite{ZhangZ20} and set the perturb ratio for poison attacks to be 20\%. 
For the evasion DICE, we randomly remove intra-class edges and add inter-class edges on the test graph while keeping the graph structure between labeled nodes unchanged.  
We set the perturb ratio of DICE attack in $\{0.4, 0.8, 1.2, 1.6\}$. 
From Figure~\ref{fig:homo}, it can be seen that all attacks result in homophily gaps between the training graphs and the test graphs. 
\begin{figure}
\centering
\includegraphics[width=\textwidth]{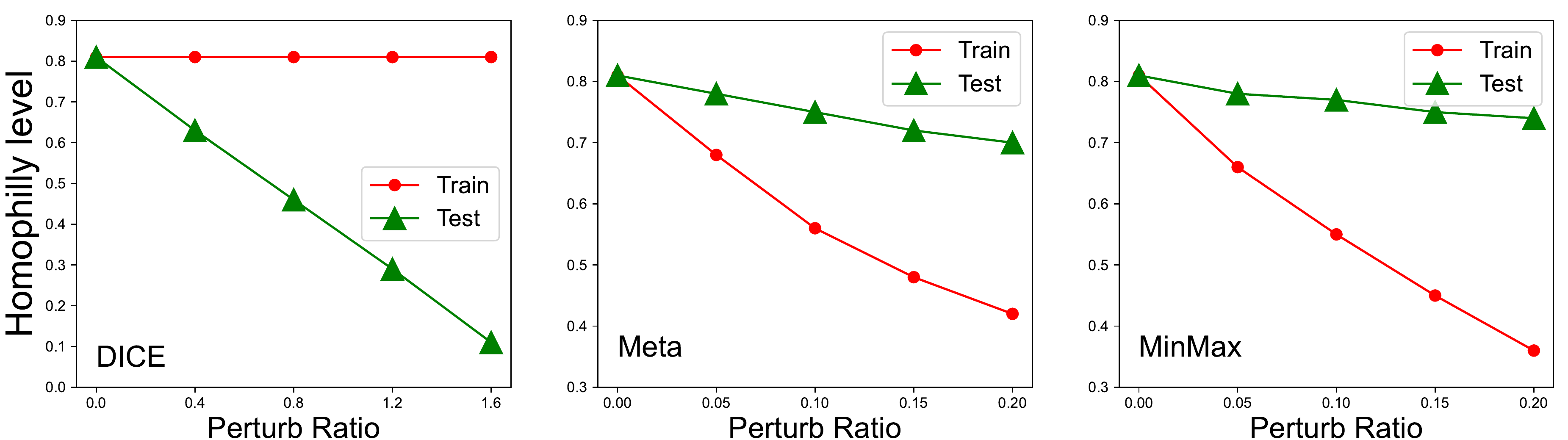}
\caption{Homophily level of training graphs and test graphs on Cora after DICE attack, Metattack, and MinMax attack. 
All attacks result in a homophily gap between training and test graphs.}.
\label{fig:homo}
\end{figure}
Besides the 1-hop homophily gap, in Table~\ref{table:homo_gap}, we present the change in two-hop homophily for learnable attacks with perturb ratio of 0.2. 
As can be seen, the two-hop homophily gap is relatively smaller than the one-hop homophily gap, which is in accordance with our analysis that homophily between even-hop neighbors is more robust.

\begin{table}[htbp]   
\begin{center}   
\caption{Homophily gaps between training and test graphs after Meta/MinMax attacks with 20\% the perturb ratio. }  
\label{table:homo_gap} 
\resizebox{\linewidth}{!}{
\begin{tabular}{|c|c|c|c|c|c|c|}   
\hline   \textbf{Homophily \textbackslash dataset} & \textbf{Meta-Cora} & \textbf{Meta-Citeseer} & \textbf{Meta-ACM} & \textbf{MinMax-Cora} & \textbf{MinMax-Citeseer} & \textbf{MinMax-ACM}\\ 
\hline   1-hop Train & 0.42 & 0.4& 0.49 & 0.36 & 0.38 & 0.49\\ 
\hline   1-hop Test & 0.7 & 0.65 & 0.72 & 0.74 & 0.69 & 0.72 \\ 
\hline   \textbf{1-hop Gap} & \textbf{0.28} & \textbf{0.25}& \textbf{0.23} & \textbf{0.38}&\textbf{0.31} & \textbf{0.23}\\  
\hline   2-hop Train & 0.52 & 0.55 & 0.54 & 0.37 & 0.40 & 0.36\\      
\hline   2-hop Test & 0.65 & 0.66 & 0.61  & 0.69 & 0.68 & 0.56\\
\hline   \textbf{2-hop Gap} & \textbf{0.13}&\textbf{0.11} &\textbf{0.07} &\textbf{0.32} &\textbf{0.28} &\textbf{0.20} \\
\hline
\end{tabular}   
}
\end{center}   
\end{table}

\textbf{Results.} Defense results are presented in Table~\ref{tab:defense} and Figure~\ref{fig:DICE}. 
For the DICE attack, the performance of all methods significantly decreases along with the increase of the homophily gap except EvenNet.
Interestingly, when the homophily gap is enormous, EvenNet enjoys a performance rebound, consistent with our topological information theory (strong homo. and strong hetero. are both helpful for prediction). 
For poison attacks, EvenNet achieves SOTA compared with advanced defense models. 
Unlike spatial defense models, EvenNet is free of introducing extra time or space complexity.

\begin{table}[htbp]
\small
  \centering
  \caption{Average node classification accuracy (\%) against non-targeted poison attacks Metattack and MinMax attack with perturb ratio 20\% over 5 different splits. The best result is highlighted by \textbf{bold} font, the second best result is \underline{underlined}.}
  \resizebox{\textwidth}{!}{
    \begin{tabular}{lcccccc}
    \toprule
    \textbf{Dataset} & \multicolumn{1}{l}{\textbf{Meta-cora}} & \multicolumn{1}{l}{\textbf{Meta-citeseer}}  & \multicolumn{1}{l}{\textbf{Meta-acm}} & {\textbf{MM-cora}} & \multicolumn{1}{l}{\textbf{MM-citeseer}}  & \multicolumn{1}{l}{\textbf{MM-acm}} \\
    \midrule
    MLP         & 58.60 & 62.93 & 85.74 & 59.81 & 63.72 & 85.66 \\
    GCN         & 63.76 & 61.98 & 68.29 & 69.21 & 68.02 & 69.37 \\
    GAT         & 66.51 & 63.66 & 68.50 & 69.50 & 67.04 & 69.26 \\      
    GCNII       & 66.57 & 64.23 & 78.53 & 73.01 & 72.26 & 82.90 \\
    H2GCN       & 71.62 & 67.26 & 83.75 & 66.76 & 69.66 & 84.84 \\
    FAGCN       & 72.14 & 66.59 & 85.93 & 64.90 & 66.33 & 81.49 \\
    GPRGNN      & \underline{76.27} & \underline{69.63} & \underline{88.79} & 77.18 & \underline{72.81} & \underline{88.24} \\
    \midrule
    RobustGCN   & 60.38 & 60.44 & 62.29 & 68.53 & 63.16 & 61.60 \\
    GNN-SVD     & 64.83 & 64.98 & 84.55 & 66.33 & 64.97 & 81.08 \\
    GNN-Jaccard & 68.30 & 63.40 & 67.81 & 72.98 & 68.43 & 69.03 \\
    GNNGuard    & 75.98 & 68.57 & 62.19 & 73.23 & 66.14 & 66.15 \\ 
    ProGNN      & 75.25 & 68.15 & 83.99 & \underline{77.91} & 72.26 & 73.51 \\ 
    \midrule
    
     EvenNet & {\textbf{77.74}} & {\textbf{71.03}} & {\textbf{89.78}} & {\textbf{78.40}} & {\textbf{73.51}} & {\textbf{89.80}} \\
    \bottomrule
    \end{tabular}%
    }
  \label{tab:defense}%
 
\end{table}%

\begin{figure}
 \centering
 \includegraphics[width=\textwidth]{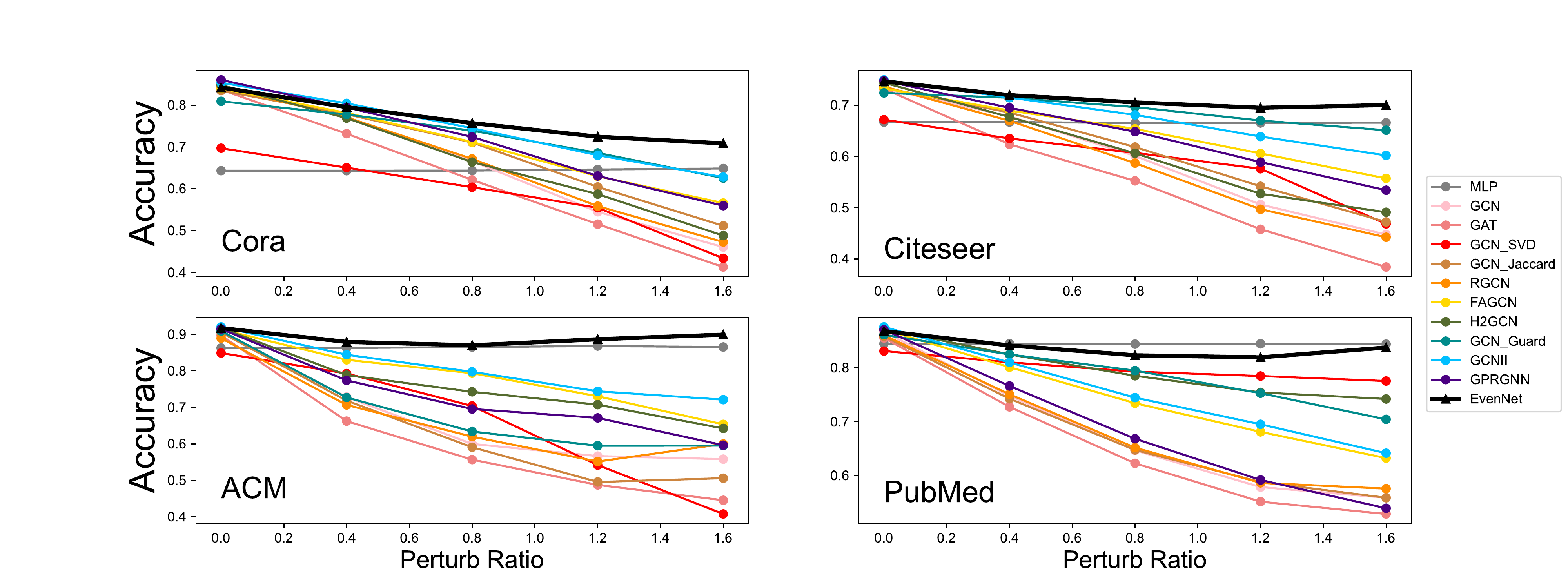}
 \caption{DICE attack on four homophilic datasets. EvenNet is marked with ``$\triangle$''. }
 \label{fig:DICE}
\end{figure}

\subsection{Performance on real-world graph datasets} 
\label{sec:54}
We evaluate EvenNet on real-world datasets to examine the performance of EvenNet on clean graphs. 
Besides the datasets used in Section~\ref{sec:atk}, we additionally include four public heterophilic datasets: Actor, Cornell, Squirrel, and Texas~\cite{HongbinPei2020GeomGCNGG,rozemberczki2021multi,tang2009social}. 
The statistics of real-world Datasets are included in Table~\ref{tab:rw_dataset}. 
In the node classification task, we transform heterophilic datasets into undirected ones following~\cite{ChienP0M21}. 

For all datasets, we adopt dense splits the same as~\cite{HongbinPei2020GeomGCNGG} to perform full-supervised node classification tasks, where the node set is split into 60\% for training, 20\% for validation, and 20\% for testing. 

\begin{table}[htbp]
\caption{Statistics of real-world datasets.}
\resizebox{\linewidth}{!}
{
\begin{tabular}{rrrrrrrrrr}
\toprule
 & Cora & Citeseer & PubMed & ACM & Chameleon & Squirrel & Cornell & Texas & Actor \\
 \midrule
Nodes & 2,708 & 3,327 & 19,717 & 3,025 & 2,277 & 5,201 & 183 & 183 & 7,600 \\
Edges & 5,278 & 4,552 & 44,324 & 13,128 & 31,371 & 198,353 & 277 & 279 & 26,659 \\
Features & 1,433 & 3,703 & 500 & 1,870 & 2,325 & 2,089 & 1,703 & 1,703 & 932 \\
Classes & 7 & 6 & 3 & 3 & 5 & 5 & 5 & 5 & 5 \\
Homophily Level & 0.81 & 0.74 & 0.80 & 0.82 & 0.23 & 0.22 & 0.30 & 0.09 & 0.22 \\
\bottomrule
\end{tabular}
}
\label{tab:rw_dataset}
\end{table}

The results are shown in Table~\ref{tab:node-classfication}. 
While EvenNet sacrifices its performance for robustness, it is still competitive on most datasets.
\begin{table}[htbp]
\centering
\caption{Average node classification accuracy(\%) on real-world benchmark datasets over 10 different splits. The best result is highlighted by \textbf{bold} font, the second best result is \underline{underlined}.}
\begin{tabular}{lcccccccc}
\textbf{Model} & \multicolumn{1}{l}{Cora} & \multicolumn{1}{l}{Cite.} & \multicolumn{1}{l}{Pubm.} & \multicolumn{1}{l}{Cham.} & \multicolumn{1}{l}{Texas} & \multicolumn{1}{l}{Corn.} &
\multicolumn{1}{l}{Squi.} & \multicolumn{1}{l}{Actor} \\
\midrule
MLP  &74.88 & 74.82 & 85.58 &46.65 &89.50 &90.17 &32.33 & 41.30 \\
\midrule
GCN  &87.19 & 80.87 & 87.51 &63.28 &80.66 &74.09 &46.42 &34.21  \\
GAT  &88.21 & \underline{81.36} & 89.42 &64.02 &81.63 &81.97 &47.87 &36.21  \\
GCNII &87.91& \textbf{82.13} & 86.41 &50.76 &86.23 &89.83 &36.35 &\textbf{41.68} \\
\midrule
FAGCN &\textbf{88.83}& 80.35& 89.34 &56.67 &89.18 &90.16 &39.10 &41.18 \\
H2GCN &87.59& 79.69 & 88.68 &55.88 &88.52 &85.57 &34.45 &39.62\\
GPRGNN &\underline{88.34}&80.16 & \textbf{90.08} &\textbf{67.13} &\underline{93.44} &\textbf{92.45} &\textbf{51.93} &\underline{41.62}  \\
\midrule
EvenNet &87.25&78.65 & \underline{89.52} &\underline{66.13} &\textbf{93.77} &\underline{92.13} &\underline{49.80} &40.48 \\
\bottomrule
\end{tabular}%
\label{tab:node-classfication}%
\end{table}%

\subsection{Ablation study}
To analyze the effect of introducing odd-order components into graph filters, we develop a regularized variant of EvenNet named EvenReg.
EvenReg adopts a full-order learnable graph filter, with the coefficients of odd-order monomials being punished as a regularization term. 
The training loss of EvenReg then takes the form: $\mathcal{L} = \mathcal{L}_{pred} + \eta \sum_{k=0}^{\lfloor K/2 \rfloor} |w_{2k+1}|$, where $\mathcal{L}_{pred}$ is the classification loss and $\eta$ is a hyper-parameter controlling the degree of regularization. 

We set $\eta=0.05$ and repeat experiments in Section~\ref{sec:csbm}. 
The results are presented in Table~\ref{tab:gprreg}. The performance of EvenReg lies between full-order GPRGNN and EvenNet, indicating the introduced odd orders impede spectral GNNs to generalize across homophily. 
\begin{table}[htbp]
\centering
\caption{Average node classification accuracy(\%) of EvenReg over 10 repeated experiments on synthetic cSBM datasets.}
\begin{tabular}{lcccccccc}
\toprule
$\phi_{train}$               & \multicolumn{2}{c}{0.75} & \multicolumn{2}{c}{0.50} & \multicolumn{2}{c}{-0.50} & \multicolumn{2}{c}{-0.75} \\
$\phi_{test}$               & 0.75       & -0.75       & 0.50        & -0.50        & -0.50         & 0.50        & -0.75    & 0.75    \\
\midrule
GPRGNN &\textbf{95.93} &53.52 &\textbf{84.42} &56.16 &\textbf{84.18} &63.76 &\textbf{95.99} &66.49 \\
EvenNet &95.29 &\textbf{94.59} &82.37 &\textbf{82.57} &81.99 &\textbf{79.81} &94.79 &\textbf{96.25} \\
EvenReg &95.44 &93.90 &84.05 &78.06 &83.72 &75.33 &95.40 &95.73 \\
\bottomrule
\end{tabular}%
\label{tab:gprreg}
\end{table}%

\section{Conclusion}
\label{sec:6}
In this study, we investigate the ability of current GNNs to generalize across homophily.
We observe that all existing methods experience severe performance degradation if a large homophily gap exists between training and test graphs.
To overcome this difficulty, we proposed EvenNet, a simple yet effective spectral GNN which is robust under homophily change of graphs. 
We provide a detailed theoretical analysis to illustrate the advantages of EvenNet in generalization between graphs with homophily gaps. 
We conduct experiments on both synthetic and real-world datasets. 
The empirical results verify the superiority of EvenNet in inductive learning across homophily and defense under non-targeted structural attacks by sacrificing only a tiny amount of predictive accuracy on clean graphs.  

\section*{Acknowledgement}
This research was supported in part by the major key project of PCL (PCL2021A12), by National Natural Science Foundation of China (No. 61972401, No. 61932001, No. 61832017), by Beijing Natural Science Foundation (No. 4222028), by Beijing Outstanding Young Scientist Program No. BJJWZYJH012019100020098, by Alibaba Group through Alibaba Innovative Research Program, by CCF-Baidu Open Fund (NO.2021PP15002000) and by Huawei-Renmin University joint program on Information Retrieval. 
We also wish to acknowledge the support provided by Engineering Research Center of Next-Generation Intelligent Search and Recommendation, Ministry of Education. 
Additionally, we acknowledge the support from Intelligent Social Governance Interdisciplinary Platform, Major Innovation \& Planning Interdisciplinary Platform for the ``Double-First Class'' Initiative, Public Policy and Decision-making Research Lab, Public Computing Cloud, Renmin University of China. 
\medskip
{\small
\bibliography{reference.bib}

\begin{thebibliography}{10}

\bibitem{SamiAbuElHaija2019MixHopHG}
Sami Abu-El-Haija, Bryan Perozzi, Amol Kapoor, Hrayr Harutyunyan, Nazanin
  Alipourfard, Kristina Lerman, Greg~Ver Steeg, and Aram Galstyan.
\newblock Mixhop: Higher-order graph convolutional architectures via sparsified
  neighborhood mixing.
\newblock In {\em ICML}, 2019.

\bibitem{balcilar2020analyzing}
Muhammet Balcilar, Guillaume Renton, Pierre H{\'e}roux, Benoit Ga{\"u}z{\`e}re,
  S{\'e}bastien Adam, and Paul Honeine.
\newblock Analyzing the expressive power of graph neural networks in a spectral
  perspective.
\newblock In {\em ICLR}, 2020.

\bibitem{FilippoMariaBianchi2021GraphNN}
Filippo~Maria Bianchi, Daniele Grattarola, Lorenzo Livi, and Cesare Alippi.
\newblock Graph neural networks with convolutional arma filters.
\newblock {\em PAMI}, 2021.

\bibitem{BoWSS21}
Deyu Bo, Xiao Wang, Chuan Shi, and Huawei Shen.
\newblock Beyond low-frequency information in graph convolutional networks.
\newblock In {\em AAAI}, 2021.

\bibitem{JoanBruna2014SpectralNA}
Joan Bruna, Wojciech Zaremba, Arthur Szlam, and Yann LeCun.
\newblock Spectral networks and locally connected networks on graphs.
\newblock In {\em ICLR}, 2014.

\bibitem{cartwright1956structural}
Dorwin Cartwright and Frank Harary.
\newblock Structural balance: a generalization of heider's theory.
\newblock {\em Psychological review}, 1956.

\bibitem{chen2020simple}
Ming Chen, Zhewei Wei, Zengfeng Huang, Bolin Ding, and Yaliang Li.
\newblock Simple and deep graph convolutional networks.
\newblock In {\em ICML}, 2020.

\bibitem{chen2022understanding}
Yongqiang Chen, Han Yang, Yonggang Zhang, Kaili Ma, Tongliang Liu, Bo~Han, and
  James Cheng.
\newblock Understanding and improving graph injection attack by promoting
  unnoticeability.
\newblock In {\em ICLR}, 2022.

\bibitem{chen2022does}
Zhixian Chen, Tengfei Ma, and Yang Wang.
\newblock When does a spectral graph neural network fail in node
  classification?
\newblock {\em arXiv preprint arXiv:2202.07902}, 2022.

\bibitem{ChienP0M21}
Eli Chien, Jianhao Peng, Pan Li, and Olgica Milenkovic.
\newblock Adaptive universal generalized pagerank graph neural network.
\newblock In {\em ICLR}, 2021.

\bibitem{DefferrardBV16}
Micha{\"{e}}l Defferrard, Xavier Bresson, and Pierre Vandergheynst.
\newblock Convolutional neural networks on graphs with fast localized spectral
  filtering.
\newblock In {\em NeurIPS}, 2016.

\bibitem{YashDeshpande2018ContextualSB}
Yash Deshpande, Subhabrata Sen, Andrea Montanari, and Elchanan Mossel.
\newblock Contextual stochastic block models.
\newblock In {\em NeurIPS}, 2018.

\bibitem{2020All}
Negin Entezari, Saba~A Al-Sayouri, Amirali Darvishzadeh, and Evangelos~E
  Papalexakis.
\newblock All you need is low (rank): Defending against adversarial attacks on
  graphs.
\newblock In {\em WSDM}, 2020.

\bibitem{fey2019fast}
Matthias Fey and Jan~Eric Lenssen.
\newblock Fast graph representation learning with pytorch geometric.
\newblock In {\em ICLR}, 2019.

\bibitem{Fu2022}
Guoji Fu, Peilin Zhao, and Yatao Bian.
\newblock p-laplacian based graph neural networks.
\newblock In {\em ICML}, 2022.

\bibitem{Gama2020}
Fernando Gama, Joan Bruna, and Alejandro Ribeiro.
\newblock Stability properties of graph neural networks.
\newblock {\em {IEEE} Trans. Signal Process.}, 2020.

\bibitem{HamiltonYL17}
William~L. Hamilton, Zhitao Ying, and Jure Leskovec.
\newblock Inductive representation learning on large graphs.
\newblock In {\em NeurIPS}, 2017.

\bibitem{HeWHX21}
Mingguo He, Zhewei Wei, Zengfeng Huang, and Hongteng Xu.
\newblock Bernnet: Learning arbitrary graph spectral filters via bernstein
  approximation.
\newblock In {\em NeurIPS}, 2021.

\bibitem{HuFZDRLCL20}
Weihua Hu, Matthias Fey, Marinka Zitnik, Yuxiao Dong, Hongyu Ren, Bowen Liu,
  Michele Catasta, and Jure Leskovec.
\newblock Open graph benchmark: Datasets for machine learning on graphs.
\newblock In {\em NeurIPS}, 2020.

\bibitem{jin2020graph}
Wei Jin, Yao Ma, Xiaorui Liu, Xianfeng Tang, Suhang Wang, and Jiliang Tang.
\newblock Graph structure learning for robust graph neural networks.
\newblock In {\em SIGKDD}, 2020.

\bibitem{kenlay2021interpretable}
Henry Kenlay, Dorina Thanou, and Xiaowen Dong.
\newblock Interpretable stability bounds for spectral graph filters.
\newblock In {\em ICML}, 2021.

\bibitem{KipfW17}
Thomas~N. Kipf and Max Welling.
\newblock Semi-supervised classification with graph convolutional networks.
\newblock In {\em ICLR}, 2017.

\bibitem{JohannesKlicpera2018PredictTP}
Johannes Klicpera, Aleksandar Bojchevski, and Stephan G{\"u}nnemann.
\newblock Predict then propagate: Combining neural networks with personalized
  pagerank for classification on graphs.
\newblock In {\em ICLR}, 2018.

\bibitem{Levie2019}
Ron Levie, Elvin Isufi, and Gitta Kutyniok.
\newblock On the transferability of spectral graph filters.
\newblock {\em CoRR}, abs/1901.10524, 2019.

\bibitem{li2020deeprobust}
Yaxin Li, Wei Jin, Han Xu, and Jiliang Tang.
\newblock Deeprobust: a platform for adversarial attacks and defenses.
\newblock In {\em AAAI}, 2021.

\bibitem{lim2021large}
Derek Lim, Felix Hohne, Xiuyu Li, Sijia~Linda Huang, Vaishnavi Gupta, Omkar
  Bhalerao, and Ser~Nam Lim.
\newblock Large scale learning on non-homophilous graphs: New benchmarks and
  strong simple methods.
\newblock {\em NeurIPS}, 2021.

\bibitem{HoangNt2019RevisitingGN}
Hoang Nt and Takanori Maehara.
\newblock Revisiting graph neural networks: All we have is low-pass filters.
\newblock {\em arXiv: Machine Learning}, 2019.

\bibitem{HongbinPei2020GeomGCNGG}
Hongbin Pei, Bingzhe Wei, Kevin Chen-Chuan Chang, Yu~Lei, and Bo~Yang.
\newblock Geom-gcn: Geometric graph convolutional networks.
\newblock In {\em ICLR}, 2020.

\bibitem{rozemberczki2021multi}
Benedek Rozemberczki, Carl Allen, and Rik Sarkar.
\newblock Multi-scale attributed node embedding.
\newblock {\em Journal of Complex Networks}, 2021.

\bibitem{sen2008collective}
Prithviraj Sen, Galileo Namata, Mustafa Bilgic, Lise Getoor, Brian Galligher,
  and Tina Eliassi-Rad.
\newblock Collective classification in network data.
\newblock {\em AI magazine}, 29(3):93--93, 2008.

\bibitem{tang2009social}
Jie Tang, Jimeng Sun, Chi Wang, and Zi~Yang.
\newblock Social influence analysis in large-scale networks.
\newblock In {\em SIGKDD}, pages 807--816, 2009.

\bibitem{VelickovicCCRLB18}
Petar Velickovic, Guillem Cucurull, Arantxa Casanova, Adriana Romero, Pietro
  Li{\`{o}}, and Yoshua Bengio.
\newblock Graph attention networks.
\newblock In {\em ICLR}, 2018.

\bibitem{MarcinWaniek2016HidingIA}
Marcin Waniek, Tomasz Michalak, Talal Rahwan, and Michael Wooldridge.
\newblock Hiding individuals and communities in a social network.
\newblock {\em Nature Human Behaviour}, 2016.

\bibitem{FelixWu2019SimplifyingGC}
Felix Wu, Amauri~Holanda de~Souza, Tianyi Zhang, Christopher Fifty, Tao Yu, and
  Kilian~Q. Weinberger.
\newblock Simplifying graph convolutional networks.
\newblock In {\em ICML}, 2019.

\bibitem{HuijunWu2019AdversarialEF}
Huijun Wu, Chen Wang, Yuriy Tyshetskiy, Andrew Docherty, Kai Lu, and Liming
  Zhu.
\newblock Adversarial examples for graph data: Deep insights into attack and
  defense.
\newblock In {\em IJCAI}, 2019.

\bibitem{han2019}
Wang Xiao, Ji~Houye, Shi Chuan, Wang Bai, Cui Peng, Yu~P., and Ye~Yanfang.
\newblock Heterogeneous graph attention network.
\newblock {\em WWW}, 2019.

\bibitem{KaidiXu2019TopologyAA}
Kaidi Xu, Hongge Chen, Sijia Liu, Pin-Yu Chen, Tsui-Wei Weng, Mingyi Hong, and
  Xue Lin.
\newblock Topology attack and defense for graph neural networks: An
  optimization perspective.
\newblock In {\em IJCAL}, 2019.

\bibitem{TWIRLS}
Yongyi Yang, Tang Liu, Yangkun Wang, Jinjing Zhou, Quan Gan, Zhewei Wei, Zheng
  Zhang, Zengfeng Huang, and David Wipf.
\newblock Graph neural networks inspired by classical iterative algorithms.
\newblock In {\em ICML}, 2021.

\bibitem{yang2016revisiting}
Zhilin Yang, William Cohen, and Ruslan Salakhudinov.
\newblock Revisiting semi-supervised learning with graph embeddings.
\newblock In {\em ICML}, pages 40--48. PMLR, 2016.

\bibitem{ZhangZ20}
Xiang Zhang and Marinka Zitnik.
\newblock Gnnguard: Defending graph neural networks against adversarial
  attacks.
\newblock In {\em NeurIPS}, 2020.

\bibitem{ZhengZDCYXX2021}
Qinkai Zheng, Xu~Zou, Yuxiao Dong, Yukuo Cen, Da~Yin, Jiarong Xu, Yang Yang,
  and Jie Tang.
\newblock Graph robustness benchmark: Benchmarking the adversarial robustness
  of graph machine learning.
\newblock In {\em NeurIPS Datasets and Benchmarks}, 2021.

\bibitem{DingyuanZhu2019RobustGC}
Dingyuan Zhu, Ziwei Zhang, Peng Cui, and Wenwu Zhu.
\newblock Robust graph convolutional networks against adversarial attacks.
\newblock In {\em SIGKDD}, 2019.

\bibitem{zhu2021relationship}
Jiong Zhu, Junchen Jin, Donald Loveland, Michael~T Schaub, and Danai Koutra.
\newblock On the relationship between heterophily and robustness of graph
  neural networks.
\newblock {\em arXiv preprint arXiv:2106.07767}, 2021.

\bibitem{JiongZhu2020BeyondHI}
Jiong Zhu, Yujun Yan, Lingxiao Zhao, Mark Heimann, Leman Akoglu, and Danai
  Koutra.
\newblock Beyond homophily in graph neural networks: Current limitations and
  effective designs.
\newblock In {\em NeurIPS}, 2020.

\bibitem{ZugnerG19}
Daniel Z{\"{u}}gner and Stephan G{\"{u}}nnemann.
\newblock Adversarial attacks on graph neural networks via meta learning.
\newblock In {\em ICLR}, 2019.

\end{thebibliography}
\bibliographystyle{plain}
}

\appendix
\clearpage
\section{Additional proofs}
\subsection{Proof of Theorem~\ref{theorem:1}}
\label{APP:A1}
Before proofing Theorem 1, We first demonstrate the superiority of even-hop neighbors over odd-hop neighbors from the perspective of random walks. 

In a binary node classification task, denote the probability of a random walk of length $k$ that starts and ends with nodes of the same label as $p_k, k > 0$.
Suppose the edge homophily level $h$ is a random variable that belongs to a uniform distribution in $[0, 1]$ and $p_1 = h$, then:

\begin{lemma} \label{lemma:app:1}
If $k$ is odd, $\mathbb{E}_{h}[p_k] = \frac{1}{2}$. If $k$ is even, $\mathbb{E}_{h}[p_k] \ge \frac{1}{2}$.
\end{lemma}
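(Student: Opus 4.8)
The plan is to reduce the length-$k$ random walk to a two-state Markov chain that tracks only the \emph{label} of the current node, derive a closed form for $p_k$ as a function of $h$, and then integrate against the uniform law on $h$, exploiting the parity of the resulting polynomial.

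First I would set up the label chain. Since the task is binary, I track the label of the walker as a state in $\{0,1\}$. The hypothesis $p_1 = h$ says a single step moves to a same-label node with probability $h$ and to a different-label node with probability $1-h$; in the balanced binary setting this is the same whichever class we currently occupy, so the one-step transition matrix is the symmetric stochastic matrix $M = \begin{pmatrix} h & 1-h \\ 1-h & h \end{pmatrix}$. Equivalently, one obtains the recursion $p_{k+1} = (2h-1)p_k + (1-h)$ by conditioning on the last step.

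Next I would diagonalize $M$: its eigenvalues are $1$ (eigenvector $(1,1)^\top$) and $2h-1$ (eigenvector $(1,-1)^\top$), so $M^k = \tfrac{1}{2}\begin{pmatrix} 1+(2h-1)^k & 1-(2h-1)^k \\ 1-(2h-1)^k & 1+(2h-1)^k \end{pmatrix}$. The probability that a length-$k$ walk returns to the same label is the diagonal entry, which gives the closed form $p_k = \tfrac{1 + (2h-1)^k}{2}$; this correctly specializes to $p_1 = h$. Finally I would take expectations over $h \sim \mathrm{Unif}[0,1]$. Substituting $u = 2h-1$, which is uniform on $[-1,1]$, yields $\mathbb{E}_h[p_k] = \tfrac12 + \tfrac12\,\mathbb{E}[u^k]$ with $\mathbb{E}[u^k] = \tfrac12\int_{-1}^{1} u^k\,du$. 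For odd $k$ the integrand is odd, so the integral vanishes and $\mathbb{E}_h[p_k] = \tfrac12$; for even $k$ it equals $\tfrac{1}{k+1} > 0$, giving $\mathbb{E}_h[p_k] = \tfrac12 + \tfrac{1}{2(k+1)} \ge \tfrac12$, which is the claim.

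The main obstacle is not the algebra, which is routine, but justifying the Markov reduction: the argument needs the same-label step probability to equal $h$ uniformly, independent both of the walker's current class and of its position along the walk. This is clean in the balanced binary case with class-symmetric homophily, so I would state that assumption explicitly (or argue it holds in expectation over start nodes). Relaxing it to unequal class sizes or asymmetric homophily would destroy the symmetry of $M$, shift its nontrivial eigenvalue, and require tracking the stationary distribution separately; I expect that is where the clean parity cancellation would need to be re-examined.
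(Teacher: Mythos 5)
Your proposal is correct, and it takes a genuinely different (and in fact tighter) route than the paper. The paper works with the two-step recursion $p_{k} = p_{k-2}\left(h^2 + (1-h)^2\right) + (1-p_{k-2})\cdot 2h(1-h)$ and then inducts on expectations, computing $\mathbb{E}_h[p_k] = \tfrac{1}{3}\mathbb{E}_h[p_{k-2}] + \tfrac{1}{3}$ by integrating over $h$ while holding $p_{k-2}$ fixed; since $p_{k-2}$ is itself a function of $h$, that factorization silently assumes an independence that does not hold, so the paper's recursion for the expectations is not exact (one can check against the true values that $\mathbb{E}_h[p_k] = \tfrac12 + \tfrac{1}{2(k+1)}$ for even $k$, whereas the paper's recursion would predict a different constant), even though its qualitative conclusion survives. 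Your derivation avoids this issue entirely: by diagonalizing the label chain $M$ you get the exact closed form $p_k = \tfrac{1}{2}\left(1 + (2h-1)^k\right)$, after which the odd/even dichotomy is a one-line parity argument under the substitution $u = 2h-1$. This buys you strictly more than the lemma asks for — the exact expectation $\tfrac12 + \tfrac{1}{2(k+1)}$ for even $k$, and even the pointwise bound $p_k \ge \tfrac12$ for every $h$ when $k$ is even — while the paper's recursive argument only targets the inequality in expectation. Your closing caveat about the modeling assumption (same-label step probability equal to $h$ uniformly in the walker's class and position) is well placed: the paper makes the same implicit assumption, and its separate multi-class discussion via between-class random walk matrices is exactly the regime where your warning about unequal class sizes and asymmetric homophily becomes relevant.
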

\begin{proof}
If $k = 1$, $p_1 = h, \mathbb{E}_{h}[p_1] = \mathbb{E}_{h}[h] = \frac{1}{2}$. 
If $k = 2$, $p_2 = h^2 + (1-h)^2 = 2(h-\frac{1}{2})^2 + \frac{1}{2} \ge \frac{1}{2}$, $\mathbb{E}_{h}[p_2] = \int_{0}^1 p_2 dh \ge \frac{1}{2}$. 

For $k>2$, 
\begin{align*}
    p_{k} &= p_{k-2}\left(\left(1-h\right)^2 + h^2 \right) + \left(1-p_{k-2}\right)\left(2h(1-h)\right) \\
    &= 4p_{k-2}h^2 -4p_{k-2}h + p_{k-2} -2h^2 + 2h \\
    \mathbb{E}_h\left[p_k\right] &= \mathbb{E}_h\left[\mathbb{E}_{h}\left[p_k \mid p_{k-2}\right]\right] \\
    &= \mathbb{E}_h\left[\int_{0}^1 p_k dh\right] \\
    &=\mathbb{E}_h\left[\frac{1}{3}p_{k-2}+\frac{1}{3}\right] \\
    &= \frac{1}{3}\left(\mathbb{E}_{h}\left[p_{k-2}+ 1\right] \right)
\end{align*}

That is, for $\mathbb{E}_h[p_{k-2}] = \frac{1}{2}$, $\mathbb{E}_h[p_{k}] = \frac{1}{2}$; for $\mathbb{E}_h[p_{k-2}] \ge \frac{1}{2}$, $\mathbb{E}_h[p_{k}] \ge \frac{1}{2}$. Therefore, Lemma~\ref{lemma:app:1} is proved.
\end{proof}

\textbf{Multi-class Cases.} We now provide a brief discussion of the superiority of even-hop neighbors in multi-class node classification tasks following~\cite{JiongZhu2020BeyondHI}.

\begin{definition}
The matrix $Q\in \mathbb{R}^{K \times K}$ is an independent between-class random walk matrix if it holds the following properties:
\begin{itemize}
\item $Q$ is a random walk matrix.
\item $\forall i\ne j$, $Q_{ii} = Q_{jj}$.
\item $\forall i\ne j, m\ne n, Q_{ij} = Q_{mn}$. 
\end{itemize}
\end{definition}

Suppose there are $K$ classes of nodes in the graph, where the number of nodes of each class is the same, and node labels are assigned independently.
Denote $h$ as the edge homophily level, the 1-step between-class random walk matrix $P$ is in the form of:
\begin{align*}
P=\left[\begin{array}{cccc}
h & \frac{1-h}{K-1} & \cdots & \frac{1-h}{K-1} \\
\frac{1-h}{K-1} & h & \cdots & \frac{1-h}{K-1} \\
\vdots & \vdots & \ddots & \vdots \\
\frac{1-h}{K-1} & \frac{1-h}{K-1} & \cdots & h
\end{array}\right], 
\end{align*}
where $P_{ij}$ denotes the probability of a 1-step random walk that starts with a node of label $i$ and ends with a node of label $j$.
By definition, $P$ is an independent between-class random walk matrix.

\begin{lemma} \label{lemma:app:class_rw_matrix}
If $M\in \mathbb{R}^{K \times K}$ is an independent between-class random walk matrix, $P' = MP$ is an independent between-class random walk matrix as well.
\end{lemma}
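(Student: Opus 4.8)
The plan is to exploit the fact that every independent between-class random walk matrix has a single common diagonal value and a single common off-diagonal value, so it can be written in the two-parameter form $\alpha I + \beta J$, where $J$ denotes the $K \times K$ all-ones matrix. First I would observe that the three defining properties force $M = (m_d - m_o)I + m_o J$, where $m_d$ is the common diagonal entry and $m_o$ the common off-diagonal entry, and likewise $P = \left(h - \frac{1-h}{K-1}\right)I + \frac{1-h}{K-1}J$. Thus both factors lie in the commutative subalgebra of $\mathbb{R}^{K\times K}$ spanned by $I$ and $J$, and the whole problem reduces to showing this subalgebra is closed under multiplication and preserves the random-walk conditions.

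The key algebraic step is the identity $J^2 = KJ$. Expanding the product of two elements of the subalgebra gives
\[
(\alpha_1 I + \beta_1 J)(\alpha_2 I + \beta_2 J) = \alpha_1\alpha_2\, I + \left(\alpha_1\beta_2 + \beta_1\alpha_2 + K\beta_1\beta_2\right)J,
\]
which shows that $P' = MP$ is again of the form $\alpha' I + \beta' J$. This immediately yields the second and third defining properties: every diagonal entry of $\alpha' I + \beta' J$ equals $\alpha' + \beta'$ and every off-diagonal entry equals $\beta'$, so $P'_{ii} = P'_{jj}$ for all $i,j$ and $P'_{ij} = P'_{mn}$ for all $i \ne j$ and $m \ne n$.

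It then remains to check that $P'$ is a random walk matrix, i.e.\ nonnegative and row-stochastic. Nonnegativity is immediate, since $P'_{ij} = \sum_{\ell} M_{i\ell}P_{\ell j}$ is a sum of products of nonnegative entries. For row-stochasticity I would use that $M$ and $P$ each fix the all-ones vector: from $M\mathbf{1} = \mathbf{1}$ and $P\mathbf{1} = \mathbf{1}$ we get $P'\mathbf{1} = M(P\mathbf{1}) = M\mathbf{1} = \mathbf{1}$, so every row of $P'$ sums to one. With all three properties established, $P'$ is an independent between-class random walk matrix.

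I do not expect a genuine obstacle here; the content is a short structural verification. The only point requiring care is recognizing that the ``equal diagonal'' and ``equal off-diagonal'' conditions are exactly the statement that the matrix belongs to the $\{I, J\}$-algebra, after which closure under multiplication follows mechanically from $J^2 = KJ$, and preservation of the random-walk property follows from the two factors being row-stochastic and nonnegative.
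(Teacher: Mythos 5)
Your proof is correct, and it takes a genuinely different (if closely related) route from the paper's. The paper verifies the defining properties entrywise: it expands $P'_{ii} = \sum_{m} M_{im}P_{mi}$ and $P'_{ij} = \sum_{k} M_{ik}P_{kj}$ and asserts these sums are index-independent — which holds because each diagonal sum consists of one term $M_{ii}P_{ii} = m_d p_d$ plus $K-1$ terms $m_o p_o$, and each off-diagonal sum of one term $m_d p_o$, one term $m_o p_d$, and $K-2$ terms $m_o p_o$, though the paper leaves this bookkeeping implicit and dismisses the random-walk property with ``it can be verified.'' Your decomposition packages exactly that bookkeeping structurally: the equal-diagonal and equal-off-diagonal conditions are equivalent to membership in the commutative algebra $\mathrm{span}\{I, J\}$, closure follows mechanically from $J^2 = KJ$, and $P'\mathbf{1} = M(P\mathbf{1}) = \mathbf{1}$ disposes of stochasticity more cleanly than the paper does. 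Your version also buys something the paper's does not: the explicit coefficients $\alpha' = \alpha_1\alpha_2$ and $\beta' = \alpha_1\beta_2 + \beta_1\alpha_2 + K\beta_1\beta_2$ give a closed form for the entries of $P^k$ by iteration — which is precisely the form of $P^m$ (with a new parameter $h'$) that the paper invokes in the subsequent Lemma on even-step walks — and you get commutativity $MP = PM$ for free; one can even check row-stochasticity at the coefficient level via $\alpha' + K\beta' = (\alpha_1 + K\beta_1)(\alpha_2 + K\beta_2) = 1$. What the paper's route buys in exchange is brevity: a two-line index computation with no auxiliary structure. One small precision point in your write-up: only the second and third defining properties force the form $\alpha I + \beta J$; the random-walk property is an additional constraint ($\alpha + K\beta = 1$, entries nonnegative) rather than part of what determines the shape — but since you verify nonnegativity and $M\mathbf{1} = \mathbf{1}$ separately, nothing in your argument is affected.
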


\begin{proof}
It can be verified that $P'$ is still a random walk matrix, 
and for all $i\ne j, m\ne n$:
\begin{align*}
    P'_{ii} &= \sum_{m=0}^{K-1} M_{im}P_{mi}
            = \sum_{m=0}^{K-1} M_{jm}P_{mj} 
            = P'_{jj} \\
    P'_{ij} &= \sum_{k=0}^{K-1} M_{ik}P_{kj} 
            = \sum_{k=0}^{K-1} M_{mk}P_{kn} 
            = P'_{mn}
\end{align*}
By definition, $P'$ is an independent between-class random walk matrix. 
\end{proof}

Denote the $k$-step between-class random walk matrix as $P^k$. 
From Lemma~\ref{lemma:app:class_rw_matrix}, we can conclude that $P^k$ is an independent between-class random walk matrix for all $k \in \mathbb{N}^{+}$.
In Lemma~\ref{lemma:app:2}, we illustrate the advantages of even-order propagation by comparing the interaction probability between classes.

\begin{lemma} \label{lemma:app:2}
If $k$ is even, the intra-class interaction probability $P^k_{ii}$ is no less than inter-class interaction probability $P^k_{ij}, i\ne j$.
\end{lemma}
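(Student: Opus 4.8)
The plan is to exploit the fact, guaranteed by Lemma~\ref{lemma:app:class_rw_matrix}, that every power $P^k$ is again an independent between-class random walk matrix: all of its diagonal entries share one common value $a_k := P^k_{ii}$ and all of its off-diagonal entries share another common value $b_k := P^k_{ij}$ (for $i \ne j$). Hence $P^k = (a_k - b_k)\, I + b_k\, J$, where $J$ is the $K \times K$ all-ones matrix. The claim $P^k_{ii} \ge P^k_{ij}$ is then equivalent to the single scalar inequality $d_k := a_k - b_k \ge 0$, so the whole lemma reduces to tracking the sign of the diagonal--off-diagonal gap $d_k$ as a function of $k$.

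Next I would set up a recurrence for $d_k$. Writing $P = \gamma I + \beta J$ with $\gamma = h - \frac{1-h}{K-1}$ and $\beta = \frac{1-h}{K-1}$, I use that matrices of the form $\alpha I + \beta J$ are closed under multiplication: since $J^2 = K J$, the product $(\alpha I + \beta J)(\alpha' I + \beta' J)$ equals $\alpha\alpha'\, I + (\alpha\beta' + \alpha'\beta + K\beta\beta')\, J$. The key observation is that the coefficient of $I$ is exactly the product of the two $I$-coefficients. Applying this to $P^k = P^{k-1} P$ gives $d_k = d_{k-1}\,\gamma$, and therefore $d_k = \gamma^k$. Equivalently, one may diagonalize $P$, whose eigenvalues are $1$ on the vector $\boldsymbol{1}$ and $\gamma$ with multiplicity $K-1$; then $P^k$ has eigenvalue $\gamma^k$ on the orthogonal complement of $\boldsymbol{1}$, and this eigenvalue is precisely $a_k - b_k$.

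Finally, I would conclude from $d_k = \gamma^k$ that for even $k$ we have $d_k = \gamma^k \ge 0$ regardless of the value of $\gamma$, i.e. $P^k_{ii} - P^k_{ij} = \gamma^k \ge 0$, which is the desired statement.

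The step I expect to require the most care is the closure computation $d_k = d_{k-1}\gamma$: one must verify that the set of matrices $\alpha I + \beta J$ really is closed under multiplication and that the $I$-coefficient is multiplicative, which hinges on the identity $J^2 = KJ$ (valid only because each class has the same size, so that $P$ is genuinely symmetric with constant off-diagonal blocks). Everything else is bookkeeping. It is worth remarking that the same formula $d_k = \gamma^k$ also clarifies the odd-hop situation: for odd $k$, $\gamma^k$ can be negative whenever $\gamma < 0$ (the heterophilic regime), so intra-class interaction need not dominate, which matches the intuition that odd-hop neighbors carry an unstable, homophily-dependent signal.
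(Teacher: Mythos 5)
Your proof is correct, and it takes a sharper route than the paper's. The paper argues by reduction to squares: it computes the $k=2$ entries explicitly, observes the perfect-square identity $P^2_{ii}-P^2_{ij}=\left(h-\frac{1-h}{K-1}\right)^2\ge 0$, and then handles $k=2m$ by invoking Lemma~\ref{lemma:app:class_rw_matrix} to say $P^m$ is again an independent between-class random walk matrix with some parameter $h'$, so $P^{2m}=(P^m)^2$ inherits the same square identity; it never computes the gap for a general power. You instead work in the two-dimensional commutative algebra spanned by $I$ and $J$ (equivalently, the eigendecomposition of $P$ into eigenvalue $1$ on $\boldsymbol{1}$ and eigenvalue $\gamma=h-\frac{1-h}{K-1}$ on $\boldsymbol{1}^{\perp}$) and obtain the exact closed form $P^k_{ii}-P^k_{ij}=\gamma^k$ for \emph{every} $k$. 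This buys you more: nonnegativity for even $k$ is immediate, equality is characterized exactly ($\gamma=0$, i.e.\ $h=\frac{1-h}{K-1}$, matching the paper's tightness remark for $k=2$), the decay rate $|\gamma|^k$ is quantified, and the paper's separate observation that the lemma fails for odd $k$ precisely when $h<\frac{1-h}{K-1}$ falls out as the sign of $\gamma^k$ rather than needing a standalone remark. Your approach also makes Lemma~\ref{lemma:app:class_rw_matrix} logically redundant, since the closure of $\{\alpha I+\beta J\}$ under multiplication reproves it. One small correction to your final caveat: the identity $J^2=KJ$ holds for the $K\times K$ all-ones matrix unconditionally; the equal-class-size assumption is needed only earlier, to justify writing the one-step matrix $P$ in the form $\gamma I+\beta J$ at all, not to validate the closure computation itself.
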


\begin{proof}
For $k=2$:
\begin{align*}
    P^2_{ii} &= h^2 + \frac{(1-h)^2}{K-1} \\
    P^2_{ij} &= \frac{2h(1-h)}{K-1} + \frac{(K-2)(1-h)^2}{(K-1)^2} \\
    P^2_{ii} - P^2_{ij} &= \left(h-\frac{1-h}{K-1}\right)^2  \ge 0
\end{align*}
The inequality is tight when $h = \frac{1-h}{K-1}$. 

For $k=2m, m>1, m\in \mathbb{N}^{+}$, $P^m$ is an independent between-class random walk matrix that can be written as:
\begin{align*}
P^m=\left[\begin{array}{cccc}
h' & \frac{1-h'}{K-1} & \cdots & \frac{1-h'}{K-1} \\
\frac{1-h'}{K-1} & h' & \cdots & \frac{1-h'}{K-1} \\
\vdots & \vdots & \ddots & \vdots \\
\frac{1-h'}{K-1} & \frac{1-h'}{K-1} & \cdots & h'
\end{array}\right], 
\end{align*}
The above proof for $k=2$ can be generalized to all $h \in [0, 1]$. 
Therefore, $P^m_{ii} \ge P^m_{ij}$ is satisfied as well.
\end{proof}
Note that Lemma~\ref{lemma:app:2} is not satisfied for odd $k$ if $h$ is relatively smaller than $\frac{1-h}{K-1}$.

\paragraph{Proof of Theorem~\ref{theorem:1}}
\begin{proof}
According to Definition 1, for a graph $\mathcal{G}$ with the $k$-step interaction probability $\tilde{\Pi}^k$, its $k$-homophily degree $\mathcal{H}_{k}(\tilde{\Pi})$ is defined as
$$
\mathcal{H}_{k}(\tilde{\Pi})=\frac{1}{N} \sum_{l=0}^{K-1}\biggl(\mathcal{\mathcal{R}}_{l} \tilde{\Pi}_{ll}^{k}-\sum_{m \neq l} \sqrt{\mathcal{\mathcal{R}}_{m} \mathcal{\mathcal{R}}_{l}} \tilde{\Pi}_{l m}^{k}\biggr).
$$
The transformed 1-homophily degree with filter $g(\tilde{L})$ is $\mathcal{H}_{1}(g(I-\tilde{\Pi}))$.

Specifically, in a binary node classification problem, the $k$-homophily degree is:
$$
\mathcal{H}_{k}(\tilde{\Pi})
= \frac{1}{N} \left({\mathcal{R}}_0\tilde{\Pi}_{00}^{k} + \mathcal{{R}}_1 \tilde{\Pi}_{11}^{k} - 2\sqrt{\mathcal{\mathcal{R}}_0 \mathcal{\mathcal{R}}_1}\tilde{\Pi}_{01}^{k}\right)
$$
Denote a $k$-order polynomial graph filter as $g_k(\tilde{L}) = w_0 + w_1 \tilde{L} + \ldots + w_k \tilde{L}^k$, 

The transformed 1-homophily degree of $g_k(\tilde{L})$ is:

\begin{align*}
\small
     \mathcal{H}_{1}\left(g_k(I-\tilde{\Pi})\right) 
    =& \frac{1}{N} \left(\mathcal{R}_0 g_k\left(I-\tilde{\Pi} \right)_{00} + \mathcal{R}_1 g_k\left(I-\tilde{\Pi} \right)_{11} - 2\sqrt{\mathcal{R}_0 \mathcal{R}_1} g_k\left(I-\tilde{\Pi} \right)_{01} \right) \\
    =& \frac{(\sqrt{\mathcal{R}_0} - \sqrt{\mathcal{R}_1})^2}{N}\left(w_0 + \ldots + w_k \right) \\
    &  - \frac{\mathcal{R}_0\tilde{\Pi}_{00} + \mathcal{R}_1\tilde{\Pi}_{11} - 2\sqrt{\mathcal{R}_0 \mathcal{R}_1}\tilde{\Pi}_{01}}{N}\left(w_1 +2w_2 + \ldots + kw_k \right) \\
    & + \frac{\mathcal{R}_0\tilde{\Pi}_{00}^2 + \mathcal{R}_1\tilde{\Pi}_{11}^2 - 2\sqrt{\mathcal{R}_0 \mathcal{R}_1}\tilde{\Pi}_{01}^2}{N}\left(w_2 + 3w_3 + \ldots  \right) \\
    & - \ldots \\
    =& \  c - \mathcal{H}_1(\tilde{\Pi})(w_1 + 2w_2 +\ldots + kw_k) + \mathcal{H}_2(\tilde{\Pi})(w_2+3w_3 + \ldots) + \ldots \\
    =& \ c + \sum_{i=1}^k \theta_i \mathcal{H}_{i}(\tilde{\Pi}),   
\end{align*}
where $c$ is a constant and each $\theta_i$ is a linear sum of $\{w_i\}$ that for arbitrary $x \in \mathbb{R}$:
\begin{align*}
    \sum_{i=0}^k \theta_i x^i &= \sum_{i=0}^k w_i (1-x)^i.
\end{align*}
Following Lemma~\ref{lemma:app:1}, the average possibility of deriving a node’s label from its odd-hop neighbors is $\frac{1}{2}$, which means $\mathbb{E}_h[\mathcal{H}_{i}(\tilde{\Pi})] = 0$ and $\mathrm{Var}_h[\mathcal{H}_{i}(\tilde{\Pi})] \ge 0$ for odd $i$. 
By removing the odd-order terms, the transformed 1-homophily degree does not decrease on average but enjoys a lower variation.

Rewrite filter $g_k(\tilde{L})$ as $g_k(\tilde{L}) = \theta_0 + \theta_1 (I-\tilde{L}) + \ldots + \theta_k (I-\tilde{L})^k$ and set $\theta_i = 0$ for odd $i$, the graph filter is then in the form of : 
\begin{align*}
     g_k(\tilde{L}) = \sum_{i=0}^{\lfloor k/2 \rfloor} \theta_i(I-\tilde{L})^{2i} = \sum_{i=0}^{\lfloor k/2 \rfloor} \theta_i P^{2i}, 
\end{align*}
which is exactly the graph filter of EvenNet. 
Therefore, EvenNet has a lower variation on the transformed 1-homophily degree without sacrificing the average performance.

\end{proof}

\subsection{Proof of Theorem~\ref{theorem:2}}
\label{APP:A2}
\begin{lemma}
\label{lemma:app:norm}
Given normalized graph Laplacian $\tilde{L}=U\Lambda U^{\top}$ and label difference as $\Delta \boldsymbol{y} = \boldsymbol{y_0} - \boldsymbol{y_1}$, unnormalized $\alpha=U^{\top}\Delta \boldsymbol{y}=(\alpha_0,\cdots, \alpha_{N-1})^{\top}$ satisfies $\sum_{i=0}^{N-1}{\alpha_i^2}=N$.
\end{lemma}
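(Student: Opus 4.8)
The plan is to exploit two elementary facts: that in a binary node classification task $\Delta \boldsymbol{y}$ is necessarily a sign vector with entries $\pm 1$, and that the eigenvector matrix $U$ is orthogonal, so that left-multiplication by $U^{\top}$ preserves the Euclidean norm. The identity $\sum_i \alpha_i^2 = N$ then falls out immediately by composing these two observations.

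First I would unpack the structure of $\Delta \boldsymbol{y}$. In the binary setting every node belongs to exactly one of the classes $\mathcal{C}_0, \mathcal{C}_1$, so the indicator vectors $\boldsymbol{y}_0$ and $\boldsymbol{y}_1$ have disjoint supports and satisfy $\boldsymbol{y}_0 + \boldsymbol{y}_1 = \boldsymbol{1}$. Consequently each coordinate of $\Delta \boldsymbol{y} = \boldsymbol{y}_0 - \boldsymbol{y}_1$ equals $+1$ when the node lies in $\mathcal{C}_0$ and $-1$ when it lies in $\mathcal{C}_1$; in either case the coordinate squares to $1$. Summing over the $N$ nodes gives $\|\Delta \boldsymbol{y}\|_2^2 = \sum_{i=0}^{N-1} (\Delta y_i)^2 = N$.

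Next I would invoke orthogonality of $U$. Since $\tilde{L} = U\Lambda U^{\top}$ with $U$ unitary, we have $U U^{\top} = I$, so the linear map $\boldsymbol{v} \mapsto U^{\top}\boldsymbol{v}$ is an isometry. Applying this to $\boldsymbol{\alpha} = U^{\top}\Delta \boldsymbol{y}$ yields $\sum_{i=0}^{N-1}\alpha_i^2 = \|\boldsymbol{\alpha}\|_2^2 = (\Delta \boldsymbol{y})^{\top} U U^{\top} \Delta \boldsymbol{y} = (\Delta \boldsymbol{y})^{\top}\Delta \boldsymbol{y} = \|\Delta \boldsymbol{y}\|_2^2 = N$, which is precisely the claim.

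There is no genuine obstacle in this lemma; it is a short norm-preservation argument. The only points worth flagging are that the conclusion relies specifically on the binary assumption (which is what forces $\Delta \boldsymbol{y}$ to be a $\pm 1$ vector — a multi-class analogue would require a different normalization), and that the norm-preservation step uses that $U$ is a true orthogonal matrix of eigenvectors rather than merely invertible.
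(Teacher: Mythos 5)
Your proof is correct and is essentially the paper's own argument in compact form: the paper expands $\sum_{i}\alpha_i^2$ coordinate-wise and cancels the cross terms using $\sum_{i}u_{ij}u_{ik}=0$ for $j\ne k$, which is precisely the isometry $\|U^{\top}\Delta\boldsymbol{y}\|_2=\|\Delta\boldsymbol{y}\|_2$ you invoke directly, combined with the same observation that $\Delta\boldsymbol{y}$ is a $\pm 1$ vector so that $\|\Delta\boldsymbol{y}\|_2^2=N$. If anything, your version is the cleaner write-up, since the paper leaves the step $\Delta\boldsymbol{y}(j)^2=1$ implicit in its first expansion.
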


\begin{proof}
Since $\tilde{L}$ is a real symmetric matrix, $U^T$ can be chosen to be an orthogonal matrix.
Denote the element of the $i$-th row and $j$-th column of $U^T$ as $u_{ik}$. 
\begin{align*}
    \sum_{i=0}^{N-1}\alpha_i^2 &= \sum_{i=0}^{N-1} \left(\sum_{j=0}^{N-1}u_{ij}\Delta \boldsymbol{y}(j)\right)^2 \\
    &= \sum_{i=0}^{N-1}\sum_{j=0}^{N-1}{u_{ij}^2} + 2\sum_{i=0}^{N-1}\left(\sum_{j=0}^{N-1}
    \sum_{k=0}^{N-1}{u_{ij}u_{ik} \Delta \boldsymbol{y}(j) \Delta\boldsymbol{y}(k)} \boldsymbol{1}\{j\ne k\}\right) \\
    &= \sum_{i=0}^{N-1}\sum_{j=0}^{N-1}{u_{ij}^2} + 2\sum_{j=0}^{N-1}
    \sum_{k=0}^{N-1}\left(\Delta \boldsymbol{y}(j) \Delta\boldsymbol{y}(k) \sum_{i=0}^{N-1}u_{ij}u_{ik} \boldsymbol{1}\{j\ne k\} \right)  \\
    &= \sum_{i=0}^{N-1}\sum_{j=0}^{N-1}{u_{ij}^2} \tag{Orthogonality} \\
    &= N
\end{align*}
\end{proof}

Lemma~\ref{lemma:app:norm} provides the relationship between normalized and unnormalized $\alpha$, which is also helpful in defining the SRL loss.

\begin{lemma}
\label{lemma:app:3}
Given \textbf{unnormalized graph Laplacian} $L=U_L\Lambda_L U_L^{\top}$ and its eigenvalues $\{\lambda_i'\}$, 
denote the number of edges on the graph is $m$, and label difference as $\Delta \boldsymbol{y} = \boldsymbol{y_0} - \boldsymbol{y_1} \in \mathbb{R}^{N \times 1}$.
For \textbf{unnormalized} spectrum of label difference on $L$ is $\boldsymbol{\alpha}' = U_L^{\top}\Delta \boldsymbol{y} = (\alpha'_0, \alpha'_1, \ldots, \alpha'_{N-1})^{\top}$
, then
\begin{align}
    \sum_{i=0}^{N-1} \lambda_i' &= 2m \label{eq:app:1}\\
    1 - h &= \frac{\sum_{i=0}^{N-1}{(\alpha_i')^2}\lambda_i'}{2\sum_{j=0}^{N-1} \lambda_j'} \notag
\end{align}
\end{lemma}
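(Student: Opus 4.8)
The plan is to handle the two identities separately, beginning with the simpler trace identity and then reducing the second claim to a classical Laplacian quadratic-form computation.

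First I would establish $\sum_{i=0}^{N-1} \lambda_i' = 2m$ by observing that the sum of eigenvalues equals the trace of $L$. Since $L = D - A$ and the diagonal of the adjacency matrix $A$ vanishes for a simple graph, $\mathrm{tr}(L) = \mathrm{tr}(D) = \sum_{i} D_{ii}$, which is the sum of all node degrees. Because each undirected edge contributes exactly $2$ to the total degree, this sum equals $2m$, giving Equation~\ref{eq:app:1}.

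The core step is rewriting the numerator $\sum_{i=0}^{N-1} (\alpha_i')^2 \lambda_i'$ as a Laplacian quadratic form. Using $\boldsymbol{\alpha}' = U_L^{\top} \Delta \boldsymbol{y}$ together with the orthogonality of $U_L$ and the eigendecomposition $L = U_L \Lambda_L U_L^{\top}$, I would show
$$\sum_{i=0}^{N-1} (\alpha_i')^2 \lambda_i' = \boldsymbol{\alpha}'^{\top} \Lambda_L \boldsymbol{\alpha}' = \Delta \boldsymbol{y}^{\top} U_L \Lambda_L U_L^{\top} \Delta \boldsymbol{y} = \Delta \boldsymbol{y}^{\top} L \Delta \boldsymbol{y}.$$
Then I would invoke the standard edge-wise identity $\boldsymbol{x}^{\top} L \boldsymbol{x} = \sum_{\{u,v\} \in \mathcal{E}} (x_u - x_v)^2$ with $\boldsymbol{x} = \Delta \boldsymbol{y}$. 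The decisive observation is that $\Delta \boldsymbol{y}$ is $\pm 1$-valued in the binary setting: $\Delta y_u = +1$ on class $\mathcal{C}_0$ and $\Delta y_u = -1$ on class $\mathcal{C}_1$. Consequently each edge term $(\Delta y_u - \Delta y_v)^2$ equals $0$ for an intra-class edge and $4$ for an inter-class edge. Counting edges by type and using the homophily definition, the number of inter-class edges is $(1-h)m$, so $\Delta \boldsymbol{y}^{\top} L \Delta \boldsymbol{y} = 4(1-h)m$. Dividing by $2\sum_{j} \lambda_j' = 4m$ from the first identity yields exactly $1-h$.

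I do not anticipate a genuine obstacle; the argument is a short chain of identities. The only point requiring care is the bookkeeping of the constant factor — in particular tracking the $4$ arising from the $\pm 1$ encoding and matching it against the denominator $2\sum_j \lambda_j' = 4m$ — and ensuring the edge-sum identity is applied over \emph{undirected} edges so that $m$ (rather than $2m$) counts each edge once.
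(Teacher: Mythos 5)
Your proposal is correct and follows essentially the same route as the paper's proof: both establish the trace identity $\sum_i \lambda_i' = \mathrm{tr}(L) = 2m$, then evaluate the Dirichlet energy $\Delta \boldsymbol{y}^{\top} L \Delta \boldsymbol{y}$ in two ways — spectrally as $\sum_i (\alpha_i')^2 \lambda_i'$ and edge-wise as $4(1-h)m$ via the $\pm 1$ encoding of labels — and combine. Your attention to the factor of $4$ and to counting undirected edges once matches the paper's bookkeeping exactly.
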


\begin{proof}
Denote the trace of a matrix $M$ as $tr(M)$, the degree of node $v_i$ as $d_i$.
\begin{align*}
    \sum_{i=0}^{N-1} \lambda_i' &= tr(L) = \sum_{i=0}^{N-1} d_i = 2m \notag \\
\end{align*}

The Dirichlet energy of the label difference is defined as:
\begin{align}
    E(\Delta \boldsymbol{y}) &= \Delta \boldsymbol{y}^{\top} L \Delta \boldsymbol{y} \notag \\
    &= \sum_{(i,j) \in \mathcal{E}}(\Delta \boldsymbol{y}_i - \Delta \boldsymbol{y}_j)^2 \notag \\
    &= 4\sum_{(i,j) \in \mathcal{E}}\textbf{1}\{\Delta \boldsymbol{y}_i \ne \Delta \boldsymbol{y}_j\} \notag \\
    &= 4(1-h)m \label{eq:app:2}
\end{align}
Using $L = U_L\Lambda_L U_L^T$, the Dirichlet energy of the label difference can also be expressed as:
\begin{align}
    E(\Delta \boldsymbol{y}) &= \Delta \boldsymbol{y}^{T} U_L \Lambda_L U_L^{\top} \Delta \boldsymbol{y} \notag \\
    &= \boldsymbol{\alpha'}^{\top} \Lambda_L \boldsymbol{\alpha'} \notag \\
    &= \sum_{i=0}^{N-1} \lambda_i' (\alpha'_i)^2 \label{eq:app:3}
\end{align}
By integrating equations~\ref{eq:app:1}~\ref{eq:app:2}~\ref{eq:app:3}, we get:
\begin{align*}
    1 - h = \frac{\sum_{i=0}^{N-1}{(\alpha_i')^2}\lambda_i'}{2\sum_{j=0}^{N-1} \lambda_j'}
\end{align*}
\end{proof}

\paragraph{Proof of Theorem~\ref{theorem:2}}
\begin{proof}
For a $k$-regular graph, denote normalized graph Laplacian as $\tilde{L} = U \Lambda U^{\top}$, then 
\begin{align}
    \tilde{L} &= D^{-1/2}LD^{-1/2} \notag \\
              &= D^{-1/2}U_L\Lambda_L U_L^{\top}D^{-1/2} \notag \\
              &= \frac{1}{k}U_L \Lambda_L U_L^{\top} \label{eq:app:4} 
\end{align}
From equation~\ref{eq:app:4}, we get $U = U_L$, $\lambda_i = \lambda_i' / k$.
Denote the spectrum of label difference on $\tilde{L}$ as $\boldsymbol{\alpha}$, then  $\boldsymbol{\alpha} = \boldsymbol{\alpha'}$. 
By substituting $\lambda'_i$ with $k \lambda_i$ in Lemma~\ref{lemma:app:3}, we acquire the equation in Theorem~\ref{theorem:2}.

If $\boldsymbol{\alpha}$ is normalized as in the SRL that satisfies $\sum_{i=0}^{N-1}\alpha_i^2 = 1$, following Lemma~\ref{lemma:app:norm}, Theorem~\ref{theorem:2} is in the form of:
\begin{align*}
    1 - h = \frac{N\sum_{i=0}^{N-1}{(\alpha_i)^2}\lambda_i}{2\sum_{j=0}^{N-1} \lambda_j}
\end{align*}
\end{proof}

\subsection{Proof of Theorem~\ref{theorem:3}}
\label{APP:A3}
\begin{proof}
Denote filter $g(\lambda)$ as $g(\lambda) = \sum_{i=0}^K w_i (1-\lambda)^i$, $g_{even}(\lambda) = \frac{1}{2}(g(\lambda) + g(2 - \lambda))$ and $g_{odd} = g(\lambda) - g_{even}(\lambda)$.
The filter $g_{even}(\lambda)$ is free of odd-order terms. 
The odd filter  $g_{odd}(\lambda)$ can be seen as the gap between the full-order filter and the even-order filter. 
\begin{align*}
    g_{even}(\lambda) &= \sum_{k=0}^{\lfloor K/2 \rfloor} w_k(1-\lambda)^{2k} \\
    g_{odd}(\lambda) &= \sum_{k=0}^{\lfloor K/2 \rfloor} w_k(1-\lambda)^{2k+1}.
\end{align*}

We now consider the SRL gap of the odd-order filter to illustrate the effect of removing odd-order terms.
The regression problem in the spectral domain with normalized $\boldsymbol{\alpha}$: $\boldsymbol{\alpha} = \sigma({g(\Lambda)\boldsymbol{\beta}})$,
$\sum_{i=0}^{N-1} \alpha_i^2 = 1$.
Suppose $\boldsymbol{\alpha}$ and $\boldsymbol{\beta}$ is positive correlated in the form of $\mathbb{E}[\boldsymbol{\alpha}] = w\boldsymbol{\beta}, w>0$, and $\lambda_{N-1} = 2$ for both $\mathcal{G}_1$ and $\mathcal{G}_2$ (to ensure both graphs can achieve $h=0$).

The SRL of filter $g_{odd}$ and $g_{even}$ between normalized $\boldsymbol{\alpha}$ and normalized $g(\boldsymbol{\lambda})\boldsymbol{\beta}$ is:
\begin{align}
    L_{odd}(\mathcal{G}) 
    &= 2 - \frac{1}{T_{odd}}\left(\sum_{i=0}^{N}{\alpha_i^2 g(\lambda_i)_{odd}}\right)  \notag \\
    &= 2 - \frac{1}{T_{odd}}\left(\sum_{i=0}^{N//2}{\alpha_i^2 g(\lambda_i)_{odd}} - \sum_{i=N//2}^{N}{\alpha_i^2 |g(\lambda_i)_{odd}|}\right) \notag \\
    &= 2 - \frac{1}{T_{odd}}\left(\sum_{i=0}^{N//2}{(\alpha_i^2  - \alpha_{N-i-1}^2)g(\lambda_i)_{odd}} \right) = 2 - L_o \label{eq:odd} \\
    L_{even}(\mathcal{G}_1) 
    &= 2 - \frac{1}{T_{even}}\left(\sum_{i=0}^{N//2}{(\alpha_i^2  + \alpha_{N-i-1}^2)g(\lambda_i)_{even}} \right) = 2 - L_e , \label{eq:even} 
\end{align}
where $T_{type} = \sqrt{\sum_{i=0}^{N-1}{g_{type}(\lambda_i)^2 \alpha_i^2}}$. 

Compare equations~\ref{eq:odd} and~\ref{eq:even}. 
If $g(\lambda_i)$ is of different monotonicity against $\{\alpha_i \}$, which happens when a trained odd-filter is generalized to graphs of opposite homophily, $L_o$ becomes negative.
In contrast, $L_e$ is always positive and benefits from reducing SRL. 

Suppose $L_o$ is the approximate SRL gap between $g(\lambda)$ and $g_{even}(\lambda)$. 
The instability of $L_o$ implies $L_g(\mathcal{G}_{train}) < L_{g_{even}}(\mathcal{G}_{train})$ and $L_g(\mathcal{G}_{test}) > L_{g_{even}}(\mathcal{G}_{test})$, reflecting a larger SRL gap of full-order filters than the even-order filters.

More generally, for the cases where $\lambda_{N-1} < 2$, we can still adopt the idea of discarding odd-order terms. 
Rewrite $g(\lambda)$ as $g'(\lambda) = \sum_{i=0}^{K} (\lambda_{mid} - \lambda)^i$, where $\lambda_{mid}$ is the median of $\{\lambda_i\}$. 
By applying the same analysis above, we can see that removing odd-order terms from $g'(\lambda)$ is still beneficial to narrow the SRL gap. 
\end{proof}

\subsection{Proof of Corollary~\ref{coro:1}}
\begin{proof}
In the case where $h(\mathcal{G}_1) = 0$ and $h(\mathcal{G}_2) = 1$, denote the label difference of $\mathcal{G}_1$ and $\mathcal{G}_2$ as $\Delta \boldsymbol{y}_1$ and $\Delta \boldsymbol{y}_2$, where $\Delta \boldsymbol{y}_1 = \left(1, -1, 1, -1 , \ldots, 1, -1 \right)^{\top}$, $\Delta \boldsymbol{y}_2 = \left(1, \ldots, 1 \right)^{\top}$.

Denote $U = (\boldsymbol{u}_0, \boldsymbol{u}_1, \ldots, \boldsymbol{u}_{N-1})^{\top}$, where $\boldsymbol{u}_i$ is the $i$-th eigenvector of $U$ and $u_k(n)$ is the $n$-th element of $\boldsymbol{u}_k$.

On ring graphs, denote $a_{kn} = \sin (\pi(k+1)n / N, b_{kn} = \cos(\pi kn / N)$, then the normalized $u_k(n)$ satisfies:
\begin{align*}
    u_{k}(n)= \begin{cases}
    \frac{a_{kn}}{\sqrt{N/2}}, & \text { for odd } k, k<N-1 \\ 
    \frac{b_{kn}}{\sqrt{N/2}}, & \text { for even }k \\ 
    \frac{\cos (\pi n)}{\sqrt{N}}, & \text { for odd } k, k=N-1 \\
    \frac{1}{\sqrt{N}} & \text { for even } k, k=0\end{cases}
\end{align*}

Let the normalized spectrum of $\mathcal{G}_1$ be $\boldsymbol{\delta} = U^T\Delta \boldsymbol{y}_1 = (\delta_0, \ldots, \delta_{N-1})^{\top}$,
the normalized spectrum of $\mathcal{G}_2$ be $\boldsymbol{\delta'} = U^T\Delta \boldsymbol{y}_2 = (\delta'_0, \ldots, \delta'_{N-1})^{\top}$. 

Suppose $1\le i < N-1$ is odd, $\delta_i = \frac{1}{\sqrt{N/2}}\sum_{i=0}^{N-1} (-1)^{i}a_{ki}$, $\delta_i^{'} = \frac{1}{\sqrt{N/2}}\sum_{i=0}^{N-1} a_{ki}$ and $\theta = \frac{\pi(i+1)}{N}$, then:
\begin{align*}
    \delta_i' - \delta_i &= \frac{1}{\sqrt{N/2}}\sum_{n=1}^{N/2}\sin\left(\frac{\pi(i+1)(2n-1)}{N} \right) \\
    &= \frac{1}{\sqrt{N/2}}\sum_{n=1}^{N/2}\sin\left((2n-1)\theta \right) \\
    &= \frac{1}{\sqrt{N/2}}\frac{\sum_{n=1}^{N/2}\sin(\theta)\sin\left((2n-1)\theta \right)}{\sin(\theta)} \\
    &= \frac{1}{\sqrt{N/2}}\frac{\sum_{n=1}^{N/2}\left(\cos\left(2n-2\right)\theta\right) -\cos\left(2n\theta\right) }{2\sin(\theta)} \\
    &= \frac{1}{\sqrt{N/2}}\frac{\cos(0) - \cos(N\theta)}{2\sin{\theta}} = 0
\end{align*}
Therefore, for odd $i$ and $1 \le i \le N-2$, $\delta_{i} = \delta_{i}^{'}$. The conclusion can be generalized to even $i$ and  $1 \le i \le N-2$ using the same method.

For $i=0$ and $i=N-1$, we have $\delta_{0} - \delta_{0}' = -\sqrt{N}$, $\delta_{N-1} - \delta_{N-1} = -\sqrt{N}$. 
The spectral gap between $\mathcal{G}_1$ and $\mathcal{G}_2$ is:
\begin{align*}
    L_g(\mathcal{G}_1) - L_g(\mathcal{G}_2) &= \sum_{i=0}^{N-1} \frac{2(\delta_i - \delta'_i) g(\lambda_i)}{\sqrt{\sum_{j=0}^{N-1} g(\lambda_j)^2}} \\
    &= \frac{2\sqrt{N}g(0) - 2\sqrt{N}g(2)}{\sqrt{\sum_{j=0}^{N-1} g(\lambda_j)^2}}
\end{align*}

Therefore, the necessary condition for the spectral gap to be $0$ is $g(0) = g(2)$.
\end{proof}

\section{Dataset Details}
\subsection{Synthetic Datasets.}
\label{APP:B2}
We conduct cSBM datasets following~\cite{ChienP0M21} in the inductive setting. 
Denote a cSBM graph $\mathcal{G}$ as $\mathcal{G} \sim \textrm{cSBM}(n, f, \lambda, \mu)$, where $n$ is the number of nodes, $f$ is the dimension of features, and $\lambda$ and $\mu$ are hyperparameters respectively controlling the proportion of contributions from the graph structure and node features. 

We assume the number of classes is 2, and each class is of the same size $n / 2$. 
Each node $v_i$ is assigned with a label $y_i \in \{-1, +1\}$ and an $f$-dimensional Gaussian vector $x_i = \sqrt{\frac{\mu}{n}} y_{i} u+\frac{Z_{i}}{\sqrt{f}}$, where $u \sim N(0, I / f)$ and $Z$ is a random noise term.

Assume the generated graph is of average degree $d$, and denote the adjacency matrix as $A$.
The graph structure of the cSBM graph is:
\begin{align*}
    \mathbb{P}\left[\mathbf{A}_{i j}=1\right]= \begin{cases}\frac{d+\lambda \sqrt{d}}{n} & \text { if } v_{i} v_{j}>0 \\ \frac{d-\lambda \sqrt{d}}{n} & \text { otherwise. }\end{cases}
\end{align*}

The parameter $\Phi$ discussed in the experiments is in the form of $\Phi = \arctan({\frac{\lambda}{m\mu} \sqrt{\frac{n}{f}}}) * \frac{2}{\pi}$, where $m>0$ is a constant.
A larger $|\Phi|$ reflects a larger $\lambda$ over $\mu$, that is the proportion of information from the graph structure is larger.

In practice, we choose $n = 3000, f=2000, d=5, m=\frac{3\sqrt{3}}{2}$ for all graphs.
The choices of $\lambda$ and $\mu$ and the resulting homophily ratio are listed in Table~\ref{tab:cSBM}.
As discussed in~\cite{YashDeshpande2018ContextualSB}, only the hyperparameters $\mu$ and $\lambda$ that satisfy $\lambda^2 + \frac{\mu^2f^2}{n^2} > 1$ are guaranteed to generate informative cSBM graphs. 
As presented in Table~\ref{tab:cSBM}, all our settings satisfy the need.
\begin{table}[htbp]
\caption{Statistics of cSBM datasets.}
\centering
\begin{tabular}{rrrrr}
\toprule
$\Phi$ & +0.75 & +0.50 & -0.50 & -0.75  \\
\midrule
$\lambda$       & 1.90 & 1.46 & -1.46 & -1.90  \\
$\mu$           & 0.37 & 0.69 & 0.69 & 0.37 \\
Homophily Level & 0.92 & 0.82 & 0.18 & 0.08  \\
\bottomrule \\
\end{tabular}
\label{tab:cSBM}
\end{table}

\section{Experiment Details}
\label{APP:C}
\subsection{Experimental Device}
Experiments are conducted on a device with an NVIDIA TITAN V GPU (12GB memory), Intel(R) Xeon(R) Silver 4114 CPU (2.20GHz), and 1TB of RAM.

\subsection{Model Architectures}
For GPRGNN, GCNII, GNNGuard and ProGNN, we rely on the officially released code.
For FAGCN, we implement the method with Pytorch Geometric(PyG) based on the released code.
For H2GCN, we rely on the PyG version implemented by~\cite{lim2021large}.
Defense models are based on the DeepRobust Library implemented versions\cite{li2020deeprobust}.
Other methods are based on the PyG implemented versions~\cite{fey2019fast}.
The URL and commit number are presented in Table~\ref{tab:code}).

\begin{table}[htbp]
\centering
\caption{Code \& commit numbers. }
\begin{tabular}{ccc}
\toprule
 & URL & Commit \\
\midrule
GPRGNN  & \url{https://github.com/jianhao2016/GPRGNN}  & eb4e930 \\
ProGNN & \url{https://github.com/ChandlerBang/Pro-GNN} & c2d970b\\
GNNGuard &\url{https://github.com/mims-harvard/GNNGuard} & 88ab8ff \\
GCNII &\url{https://github.com/chennnM/GCNII} & ca91f56 \\
FAGCN & \url{https://github.com/bdy9527/FAGCN} &23bb10f \\
H2GCN & \url{https://github.com/CUAI/Non-Homophily-Large-Scale} & 281a1d0\\
\bottomrule \\
\end{tabular}
\label{tab:code}
\end{table}

\subsection{Hyperparameter settings}
\paragraph{Node classification on cSBM Datasets \& Common datasets.}
For all models, we use early stopping 200 with a maximum of 1000 epochs.
All hidden size of layers is set to 64.
We use the Adam optimizer and search the optimal learning rate over \{0.001, 0.005, 0.01, 0.05\} and weight decay \{0.0, 0.0005\}.
For all models, the linear dropout is searched over \{0.1, 0.3, 0.5, 0.7, 0.9\}.
For the model-specific hyperparameters, we refer to the optimal hyperparameters reported in corresponding papers. 
For MLP, we include 2 linear layers.
For GCN and H2GCN, we set the number of convolutional layers to 2.
For GAT, we use 8 attention heads with 8 hidden units each in the first convolutional layer, and 1 attention head and 64 hidden units in the second convolutional layer.
For FAGCN, we search the number of layers over \{2, 4, 8\}, $\epsilon$ over \{0.3, 0.4, 0.5\}.
For GCNII, we set $\lambda=0.5$ and search the number of layers over \{8, 16, 32\}, $\alpha$ over \{0.1 0.3 0.5\}.
For GPRGNN and EvenNet, we set the number of linear layers to be 2, the learning rate for the propagation layer to be 0.01, and $\alpha=0.1$ for initialization.
For both models, we search the dropout rate for the propagation layer over \{0.1, 0.3, 0.5, 0.7\} and the order of graph filter over \{4, 6, 8, 10\}.

\paragraph{Against adversarial attacks.}
For the poisson attacks, we use a 2-layer GCN as the surrogate model. 
We use the strongest variant of Metattack, which is ``Meta-Self'' as the attack strategy.
For the defense models, we carefully follow their provided guidelines of hyperparameter settings and use the optimal hyperparameters as they reported.
For GNNGuard, we use the official implementation GCNGuard with a threshold of 0.1.
For other models, we use the Adam optimizer with a learning rate of 0.01, weight decay of 0.0005, and a dropout rate of 0.5.
For FAGCN, we use 8 convolutional layers and a fixed $\epsilon = 0.3$.
For GCNII, we use 16 convolutional layers and a fixed $\alpha=0.2$.
For GPRGNN and EvenNet, we set the order of graph filter $K$ to be 4 in the DICE attack.
In the poison attacks, the PPR initialization $\alpha$ for GPRGNN and EvenNet is searched over \{0.1, 0.2, 0.5, 0.9\} and $K$ is set to be 10.

For all models, we use early stopping 30 with a maximum of 200 epochs.
Other hyperparameters are kept the same as the ones in the node-classification experiments.

\subsection{Additional Defense Results}
\paragraph{Homophily gap} We include the homophily gap between the training and test graph for Citeseer and ACM datasets in Fig~\ref{fig:homo_citeseer} and~\ref{fig:homo_ACM}. 
The homophily gaps of all attacks on all datasets grow larger as the perturb ratio increases. 

\begin{figure}[htbp!]
\centering
\includegraphics[width=\textwidth]{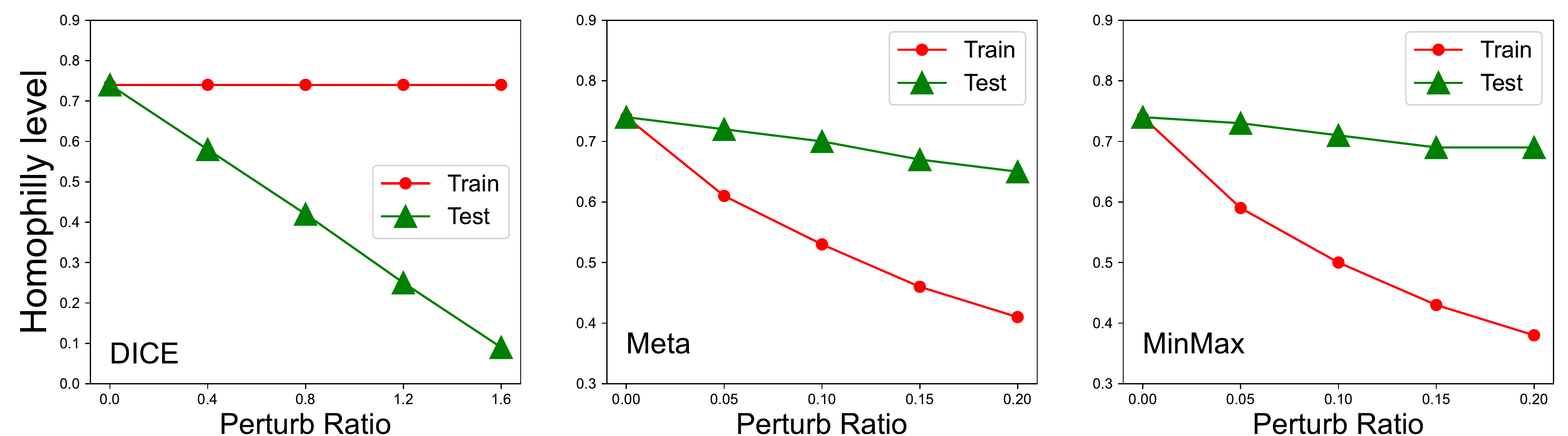}
\caption{Homophily level of training graphs and test graphs on Citeseer after attacks.}
\label{fig:homo_citeseer}
\end{figure}

\begin{figure}[htbp!]
\centering
\includegraphics[width=\textwidth]{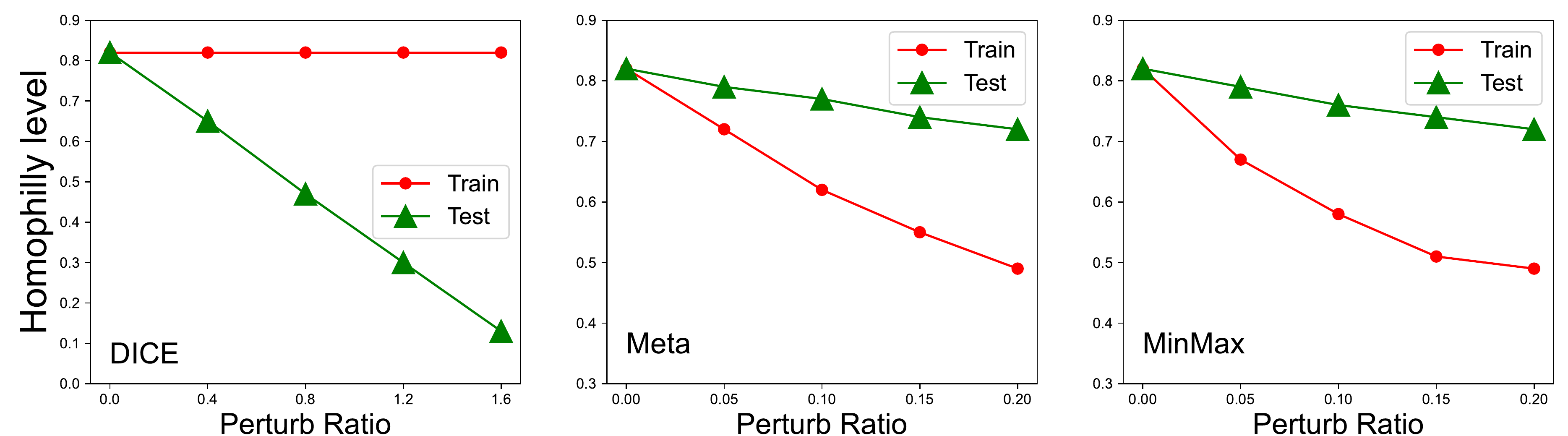}
\caption{Homophily level of training graphs and test graphs on ACM after attacks.}
\label{fig:homo_ACM}
\end{figure}

\paragraph{Additional Experiments about Defense against Poison Attacks.}
Similar to DICE attacks, we provide the performance of GNN models under poison attacks of different perturb ratios. 
The results are presented in Figure~\ref{fig:Meta} and~\ref{fig:MinMax}. 
In most cases, EvenNet achieves SOTA with fewer introduced parameters.

\begin{figure}[htbp!]
 \centering
 \includegraphics[width=\textwidth]{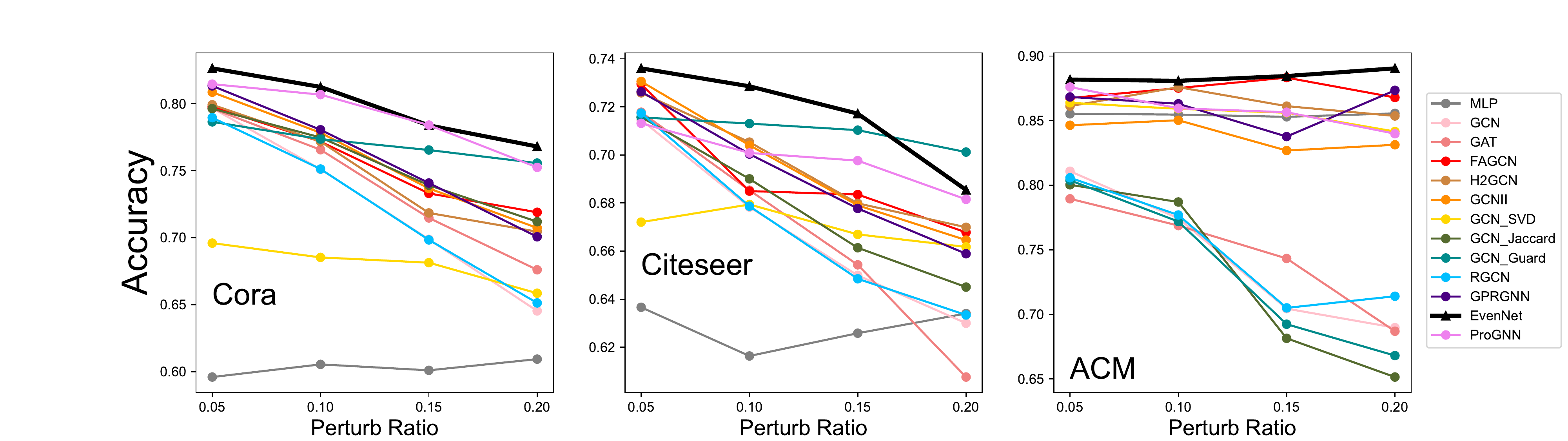}
 \caption{Meta attack on three homophilic datasets. EvenNet is marked with ``$\triangle$''. }
 \label{fig:Meta}
\end{figure}

\begin{figure}[htbp!]
 \centering
 \includegraphics[width=\textwidth]{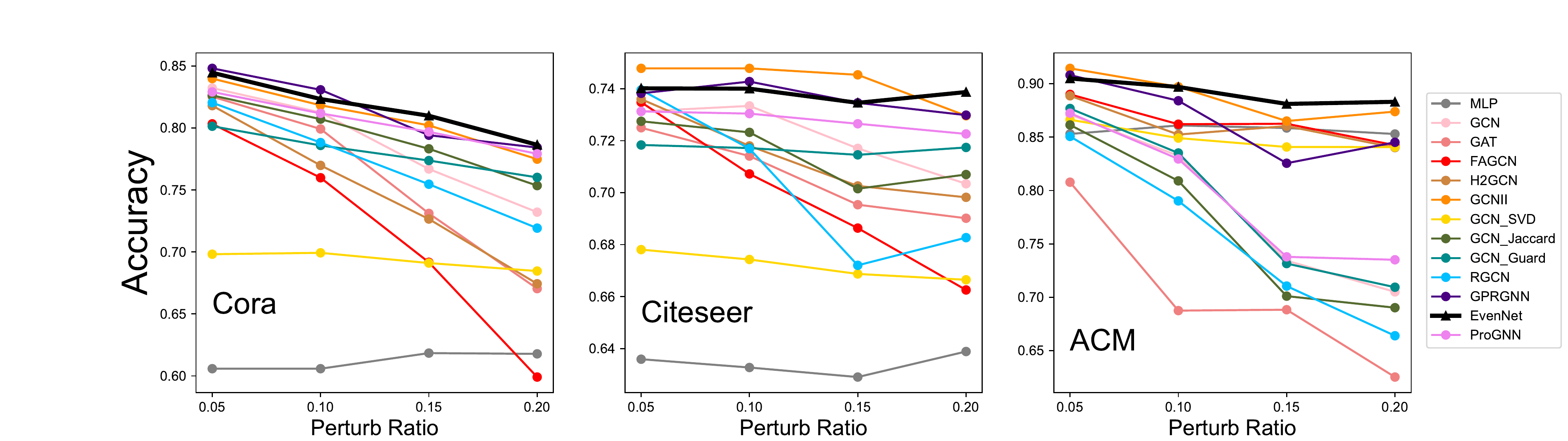}
 \caption{MinMax attack on three homophilic datasets. EvenNet is marked with ``$\triangle$''. }
 \label{fig:MinMax}
\end{figure}

\section{Spectral Methods under Perturbations}
\label{APP:D}
\subsection{More Graph filters}
Spectral methods have gained plenty of attention these years. 
Besides GPRGNN, we additionally include BernNet~\cite{HeWHX21} and pGNN~\cite{Fu2022} which are advanced spectral GNNs for comparison.
We run the experiments of spectral GNNs against Meta, MinMax attacks, and the evasion DICE on dataset ACM.
We tune the hyperparameters of these methods using the same search space in corresponding papers.
The results are summarized below: 

\begin{table}[htbp!]   
\begin{center}   
\caption{Defense performance under Metattack with perturb ratio 20\%. }
\label{Meta} 
\begin{tabular}{|c|c|c|c|}   
\hline   \textbf{Methods  \textbackslash Dataset} & \textbf{Cora} & \textbf{Citeseer} & \textbf{ACM}\\ 
\hline   GPRGNN &  \underline{76.27 $\pm$ 1.43} & \underline{69.63 $\pm$ 1.53}  & 88.79 $\pm$ 2.21\\ 
pGNN & 72.68 $\pm$ 2.38 & 67.20 $\pm$ 1.30 & \textbf{89.92 $\pm$ 0.66} \\ 
BernNet & 74.38 $\pm$ 2.00 & 67.93 $\pm$ 1.33  &  87.82 $\pm$ 0.98\\  
EvenNet & \textbf{77.74 $\pm$ 0.82} & \textbf{71.03 $\pm$ 0.97} & \underline{89.78 $\pm$ 0.90}\\      
\hline   
\end{tabular}   
\end{center}   
\end{table}

\begin{table}[htbp!]   
\begin{center} 
\caption{Defense performance under MinMax attack with perturb ratio 20\%. }
\label{MinMax} 
\begin{tabular}{|c|c|c|c|}   
\hline   \textbf{Methods  \textbackslash Dataset} & \textbf{Cora} & \textbf{Citeseer} & \textbf{ACM}\\ 
\hline   GPRGNN & \underline{77.18 $\pm$ 1.37} & \underline{72.81 $\pm$ 0.78} & 88.24 $\pm$ 1.28\\ 
pGNN & 77.06 $\pm$ 1.32 & 72.22 $\pm$ 0.53 & \underline{88.96 $\pm$ 0.56}\\ 
BernNet & 69.10 $\pm$ 1.07 & 67.82 $\pm$ 0.79  & 87.79 $\pm$ 0.41\\  
EvenNet & \textbf{78.40 $\pm$ 1.26} & \textbf{73.51 $\pm$ 0.60}  & \textbf{89.80 $\pm$ 0.46}\\      
\hline   
\end{tabular}   
\end{center}   
\end{table}

\begin{table}[htbp!]   
\begin{center}   
\caption{Defense performance under DICE attack on ACM dataset with different perturb ratios. }  
\label{DICE, acm} 
\begin{tabular}{|c|c|c|c|}   
\hline   \textbf{Methods  \textbackslash Perturb Ratios} & \textbf{0.4} & \textbf{0.8} & \textbf{1.2} \\ 
\hline  GPRGNN & 79.31 $\pm$ 1.05 & 73.21 $\pm$ 1.93  & 63.41 $\pm$ 9.21  \\ 
pGNN & \underline{86.67 $\pm$ 0.91} & \underline{84.55 $\pm$ 2.66}  & 81.62 $\pm$ 2.32 \\ 
BernNet & 86.37 $\pm$ 3.30 & 82.79 $\pm$ 3.74  & \underline{81.90 $\pm$ 4.46}  \\  
EvenNet & \textbf{89.24 $\pm$ 0.52} &  \textbf{88.26 $\pm$ 0.82}  & \textbf{88.67 $\pm$ 0.64} \\      
\hline   
\end{tabular}   
\end{center}   
\end{table}

Compared with spatial methods, spectral methods which handle both homophily and heterophily are generally more robust. 
Nevertheless, EvenNet still holds superiority against other spectral methods when faced with large homophily changes. 
In the evasion DICE attack, where the homophily gap is directly injected between training and test graphs, the superiority of EvenNet is apparent. 

\subsection{Graph Filters under Random Attacks}
We analyze the performance of graph filters under homophily change in the main body of the paper and show that graph filters suffer from performance degradation if there is a large homophily gap between training and test graphs.
We now discuss cases where a homophily gap is not huge after the graph structure is perturbed, for example, when the graph is under random attacks for both training and test sets.

We conduct Random Attacks on datasets Cora and Citeseer for the spectral methods.
In Random attacks, we randomly delete and add edges from/to the graph (we choose to delete or add with equal probability), and train graph filters on the perturbed graph.
A large homophily gap does not exist since the deleted/added edges are randomly chosen on the whole graph.

We also include a spatial method EGCNGuard from~\cite{chen2022understanding}, which is an efficient version of GNNGuard for comparison.
The results are summarized in Table~\ref{Random Cora} and Table~\ref{Random Citeseer}:

\begin{table}[htbp!]   
\begin{center}   
\caption{Defense performance under Random attacks on Cora} 
\label{Random Cora} 
\begin{tabular}{|c|c|c|c|}   
\hline   \textbf{Methods  \textbackslash Perturb Ratio} & \textbf{20\%} & \textbf{40\%} & \textbf{60\%}\\
\hline   GPRGNN & \underline{82.62 $\pm$ 0.31} & 78.86 $\pm$ 0.60  & \underline{76.68 $\pm$ 0.12}\\
pGNN & \textbf{83.54 $\pm$ 0.19} & \textbf{80.52 $\pm$ 0.52}  & \textbf{77.18 $\pm$ 0.49}\\ 
BernNet & 78.40 $\pm$ 3.11 & 73.69 $\pm$ 3.20  & 69.34 $\pm$ 1.48\\  
EvenNet & 82.37 $\pm$ 0.49 & \underline{78.95 $\pm$ 0.47}   & 76.01 $\pm$ 0.71 \\ 
EGCNGuard & 77.62 $\pm$ 1.40 & 75.77 $\pm$ 1.46 & 73.20 $\pm$ 1.01\\ 
\hline   
\end{tabular}   
\end{center}   
\end{table}

\begin{table}[htbp!]   
\begin{center}   
\caption{Defense performance under Random attacks on Citeseer}
\label{Random Citeseer} 
\begin{tabular}{|c|c|c|c|}   
\hline   \textbf{Methods  \textbackslash Perturb Ratio} & \textbf{20\%} & \textbf{40\%} & \textbf{60\%}\\ 
\hline   GPRGNN & \textbf{73.45 $\pm$ 0.81} &  70.25 $\pm$ 0.46 &  \underline{69.72 $\pm$ 0.79}\\ 
pGNN & 72.61 $\pm$ 0.93 & \textbf{72.52 $\pm$ 0.66} & \textbf{70.36 $\pm$ 1.44}\\ 
BernNet & 66.98 $\pm$ 1.25 & 66.47 $\pm$ 0.73  & 66.80 ± 0.39\\  
EvenNet & \underline{73.01 $\pm$ 0.68} & \underline{71.30 $\pm$ 0.78}  & 69.66 $\pm$ 0.50 \\      
EGCNGuard & 72.12 $\pm$ 0.82 & 69.61 $\pm$ 1.42 & 66.98 $\pm$ 2.60\\
\hline
\end{tabular}   
\end{center}   
\end{table}
  
Although EvenNet is designed based on homophily generalization, we could all spectral methods including EvenNet are quite robust under random attacks, reflecting good stability under random perturbations of graph structures. 
For a more comprehensive analysis of the performance of graph filters under random perturbations, we refer the readers to stability theory, where the bound of change in output of graph filters is discussed~\cite{Gama2020, kenlay2021interpretable, Levie2019}.
EvenNet as a spectral method holds the stability property as well.

\section{Scability to large graphs}
\label{APP:E}
In this section, we try to run EvenNet on a larger dataset to verify its efficiency.
For comparison, we include vanilla GCN and the efficient implementation of GNNGuard  EGCNGuard from~\cite{chen2022understanding}.
Notice that we could not run GNNGuard and ProGNN for their $O(N^2)$ space complexity.

In Table~\ref{Random arxiv} and Table~\ref{Random arxiv time}, we present the performance and running time of EvenNet on dataset ogbn-arxiv against Random attacks.
We could see that EvenNet uses almost the same time as a 2-layer GCN with the same hidden size. 
And EGCNGuard of $O(|E|)$ space complexity is still 3x slower than EvenNet in practice.

\begin{table}[htbp!]   
\begin{center}   
\label{Random arxiv} 
\caption{Defense performance under Random attacks on dataset ogbn-arxiv.}  
\begin{tabular}{|c|c|c|c|}   
\hline   \textbf{Methods  \textbackslash Perturb Ratio} & \textbf{20\%} & \textbf{40\%} & \textbf{60\%}\\ 
\hline   GCN & 64.07  &  60.96  & 58.45\\ 
EGCNGuard & \textbf{64.52} & 60.81  & 57.18\\     
EvenNet & 64.18 & \textbf{61.20}  & \textbf{58.97}\\
\hline
\end{tabular}   
\end{center}   
\end{table}

\begin{table}[htbp!]
\begin{center}   
\label{Random arxiv time} 
\caption{Computational time under Random attacks on dataset ogbn-arxiv.}  
\begin{tabular}{|c|c|}   
\hline   \textbf{Methods} & \textbf{Avg. training time per epoch} (s$\times 10^{-3}$) \\ 
\hline   GCN & 0.253\\ 
EGCNGuard & 1.181\\     
EvenNet & 0.38\\
\hline
\end{tabular}   
\end{center}   
\end{table}

\section{Defense against Graph Injection Attacks}
\label{APP:F}
In the main body of the paper, we mainly focus on graph modification attacks, which are graph structural attacks that add/remove edges to/from the existing graph. 
Another line of graph structural attack is graph injection attacks (GIAs), where new nodes are injected into the graph with generated features and form connections with existing nodes on the graph.
According to~\cite{chen2022understanding}, GIAs significantly degrade the performance of GNNs by injecting only a few nodes with limited budgets.
The authors also state that injecting nodes with suspicious features that result in homophily inconsistency helps in enhancing the ability to attack.

Following~\cite{chen2022understanding}, we apply non-targeted GIAs with Harmonious Adversarial Objective(HAO) including PGD+HAO, AGIA+HAO, TDGIA+HAO as the attack methods.
The authors claim that GIAs with HAO cause larger homophily gaps between training and test graphs.
We test the defense performance of EvenNet on dataset grb-cora, grb-citeseer~\cite{ZhengZDCYXX2021} and Arxiv~\cite{HuFZDRLCL20}.
We use the same budgets as in~\cite{chen2022understanding}, which is reported in Table~\ref{Budges}.
We compare EvenNet including Layernorm in MLP layers with different combinations of EGCNGuard with Layernorm and LNi operation. 
(Layernorm is shown to be effective against GIAs.)
We set the threshold for EGCNGuard as 0.1.
We set the order of EvenNet $K$ as 2 and tune the PPR-like initialization of EvenNet $\alpha$ over \{0.1, 0.2, 0.5, 0.9\} 

The results are summarized below in Table~\ref{EvenNet GIA} and Table~\ref{Guard GIA}. 
We are not able to run AGIA+HAO on Arxiv dataset due to resource limitations.

\begin{table}[htbp!]   
\begin{center}   
\label{Budges} 
\caption{Fixed budges of GIA attacks.}
\begin{tabular}{|c|c|c|}   
\hline   \textbf{Datasets  \textbackslash Perturb Ratio} & \textbf{Inject Nodes} & \textbf{Degree} \\ 
\hline   Cora & 60 & 20 \\
Citeseer & 90 & 10 \\  
Arxiv & 1500 & 100\\
\hline
\end{tabular}   
\end{center}   
\end{table}

\begin{table}[htbp!]   
\begin{center}   
\label{EvenNet GIA} 
\caption{Defense performance of EvenNet under GIA with fixed budgets.}
\begin{tabular}{|c|c|c|c|}   
\hline   \textbf{Methods  \textbackslash Datasets} & \textbf{grb-Cora} & \textbf{grb-Citeseer} & \textbf{arxiv} \\
\hline   PGD+HAO & 76.24 & 71.68  &  59.04\\ 
AGIA+HAO & 75.25 & 71.26  &  --\\ 
TDGIA+HAO & 77.23 & 70.85 &  55.12\\  
\hline
\end{tabular}   
\end{center}   
\end{table}

\begin{table}[htbp!]   
\begin{center}   
\label{Guard GIA} 
\caption{Defense performance of EGCNGuard under GIA with fixed budgets.}  
\begin{tabular}{|c|c|c|c|}   
\hline   \textbf{Methods  \textbackslash Datasets} & \textbf{grb-Cora} & \textbf{grb-Citeseer} & \textbf{Arxiv} \\
\hline   PGD+HAO & 75.50 & 58.10  &  69.37\\ 
AGIA+HAO & 72.88 & 56.32  &  --\\ 
TDGIA+HAO & 73.75 & 58.10  &  51.23\\  
\hline
\end{tabular}   
\end{center}   
\end{table}

Notice that although GIAs include structural modifications by adding edges between injected nodes and existing nodes, GIAs are not pure structural attacks. 
The injected node features are usually learnable, and therefore suspicious node features together with structural perturbations are included, which is beyond the scope of the paper.
We choose to leave designing robust spectral methods under feature perturbations to future works.
Notwithstanding GIAs are somehow out of the scope of EvenNet, we can see that EvenNet is still competitive against strong spatial baselines in Table~\ref{EvenNet GIA} and Table~\ref{Guard GIA}, which verifies the ability of EvenNet under homophily change.

\section{Defense on Heterophilic Graphs}
\label{APP:G}
In the main body of the paper, we conduct attacks mainly on homophilic graphs. 
In this section, we add experiments about the MinMax attack on heterophilic datasets chameleon and squirrel with GCN being the surrogate model.
We set the perturb ratio to be 20\%.
For the hyperparameters, we search the learning rate over \{0.01, 0.05\} and set weight decay to be 0 for all models.
We set the threshold in EGCNGuard to 0.1.
The number of layers used in H2GCN and FAGCN is set to be 2, and $\epsilon$ is searched over \{0.3, 0.4, 0.5\} for FAGCN.
For GPRGNN and EvenNet, we set $\alpha=0.1$ for PPR initialization and $K=10$.
The results are summarized below:

\begin{table}[htbp!]   
\begin{center}   
\caption{The average test accuracy against MinMax attack on heterophilic graphs over 5 different splits.}  
\begin{tabular}{|c|c|c|}   
\hline   \textbf{Method \& Dataset} & \textbf{Chameleon} & \textbf{Squirrel}\\ 
\hline   MLP &  48.84 $\pm$ 1.66 & 30.31 $\pm$ 1.25  \\
\hline   GCN &  49.93 $\pm$ 0.70 & 31.16 $\pm$ 2.19 \\
\hline   EGCNGuard &  45.34 $\pm$ 2.80 & 27.34 $\pm$ 0.90 \\
\hline   H2GCN &  51.42 $\pm$ 1.31 & 28.41 $\pm$ 1.08 \\
\hline   FAGCN & 49.98 $\pm$ 1.27 & \textbf{33.64 $\pm$ 1.10}  \\ 
\hline   GPRGNN & 50.42 $\pm$ 0.83 & 32.47 $\pm$ 1.36  \\  
\hline   EvenNet & \textbf{52.87 $\pm$ 1.88} & 33.21 $\pm$ 0.96 \\
\hline   
\end{tabular}   
\end{center}   
\end{table}

We can see from the results that EvenNet is still effective in defense against attacks on heterophilic graphs.
Yet, we did not focus on heterophilic datasets as GNNs already perform badly on them, which is also a reason why current attacks mainly focus on homophilic graphs. 
On the Squirrel dataset, the performance of GNNs is only slightly higher than MLP, reflecting an almost useless graph structure.

\end{document}